\title[On the properties of $SL(\alpha)$]{On the properties of $\alpha$-unchaining single linkage hierarchical clustering}
\date{}
\author{A. Martínez-Pérez}
\thanks{The author was partially supported by MTM-2009-07030.}
\newtheorem{definicion}{Definition}[section]
\newtheorem{prop}[definicion]{Proposition}
\newtheorem{obs}[definicion]{Remark}
\newtheorem{teorema}[definicion]{Theorem}
\newtheorem{cor}[definicion]{Corollary}
\newtheorem{ejp}[definicion]{Example}
\newcommand{\co}{\ensuremath{\colon}} % colon for functions
\newcommand{\bn}{\ensuremath{\mathbb{N}}} %The natural numbers
\newcommand{\br}{\ensuremath{\mathbb{R}}} %The real numbers
\newcommand{\cmark}{\ding{51}}
\newcommand{\xmark}{\ding{55}}
\begin{document}

\begin{abstract} In the election of a hierarchical clustering method, theoretic properties may give some insight to determine which method is the most suitable to treat a clustering problem. Herein, we study some basic properties of two hierarchical clustering methods: $\alpha$-unchaining single linkage or $SL(\alpha)$ and a modified version of this one, $SL^*(\alpha)$. We compare the results with the properties satisfied by the classical linkage-based hierarchical clustering methods.
\end{abstract}

\maketitle

\begin{footnotesize}
Keywords: Hierarchical clustering, single linkage, chaining effect, weakly unchaining, $\alpha$-bridge-unchaining. 
\end{footnotesize}

\begin{footnotesize}\textit{E-mail}: alvaro.martinezperez@uclm.es
\end{footnotesize}

\begin{footnotesize}
\textit{Address}: Departamento de Análisis Económico y Finanzas. Universidad de Castilla-\newline La Mancha. Avda. Real Fábrica de Seda, s/n. 45600. Talavera de la Reina. 
Toledo.  Spain
\end{footnotesize}

\tableofcontents

\section{Introduction}

Kleinberg discussed in \cite{Kle} the problem of clustering in an axiomatic way. He proposed a few basic properties that any clustering scheme should hold. Let $\mathcal{P}(X)$ denote the set of all possible partitions of $X$. Fix a clustering method $\mathfrak{T}$ so that $\mathfrak{T}(X)=\Pi\in \mathcal{P}(X)$. The properties proposed by Kleinberg were:

\begin{itemize}
	\item Scale invariance: For all $\alpha>0$, $\mathfrak{T}(X,\alpha \cdot d)=\Pi$
	\item Richness: Given a finite set $X$, for every $\Pi\in \mathcal{P}(X)$ there exists a metric $d_\Pi$ on $X$ such that 
	$\mathfrak{T}(X,d_\Pi)=\Pi$.
	\item Consistency: Let $\Pi=\{B_1,...,B_n\}$. Let $d'$ be any metric on $X$ such that
		\begin{itemize}
			\item[1)] for all $x,x'\in B_i$, $d'(x,x')\leq d(x,x')$ and
			\item[2)] for all $x\in B_i$, $x'\in B_j$, $i\neq j$, $d'(x,x')\geq d(x,x')$.
		\end{itemize}
		Then, $\mathfrak{T}(X,d')=\Pi$.
\end{itemize}

Then, he proved that no standard clustering scheme satisfying this conditions simultaneously can exist. This does not mean that defining a clustering function is impossible. The impossibility only holds when the unique input in the algorithm is the space and the set of distances. It can be avoided including, for example, the number of clusters to be obtained as part of the input. See \cite{ABL_10} and \cite{ZB}.

Carlsson and Mémoli studied in \cite{CM} the analogous problem for hierarchical clustering methods taking as input a finite metric space. They set three basic conditions, see Theorem \ref{Th: 18}, and prove that the unique method satisfying these conditions simultaneously is the well-known single linkage algorithm. The authors prove also that single linkage hierarchical clustering 
($SL$ $HC$) exhibits some good properties. In particular, it is stable in the Gromov-Hausdorff sense, this is, if two metric spaces are close in the Gromov-Hausdorff metric, then applying the algorithm, the ultrametric spaces obtained are also close in this metric. However, there is a basic weakness in $SL$ $HC$ which is the \textit{chaining effect} which can be seen as the tendency of the algorithm to merge two blocks when the minimal distance between them is small ignoring everything else in the distribution. 

In \cite{M1} we tried to offer some solution to this effect. We proposed a modified version of $SL$ algorithm, $\alpha$-unchaining single linkage (or $SL(\alpha)$), which shows some sensitivity to the density distribution of the sample and it is capable to distinguish blocks even though the minimal distance between them is small. We also defined a second version of this method, $SL^*(\alpha)$, to detect blocks when they are connected by a chain of points. Then, we studied the unchaining properties of both methods.

Thus, we were able to offer some solution to these chaining effects but, in exchange, we lost some of the good properties of $SL$. In particular, $SL(\alpha)$ is no longer stable in the Gromov-Hausdorff sense. In fact, as we proved in \cite{M2}, there is no stable solution to this chaining effect in the range of almost-standard linkage-based $HC$ methods using $\ell^{SL}$.

%In that paper we studied the unchaining properties of $SL(\alpha)$ and $SL^*(\alpha)$. 

Now, the question is when should we use $SL(\alpha)$ and $SL^*(\alpha)$. Among the large variety of clustering methods  the best option usually depends on the particular clustering problem. But how do we choose the most suitable algorithm for the task? Ackerman, Ben David and Loker propose to study significant properties of the clustering functions. See \cite{ABL_10} and \cite{ABL_10a}. The idea is finding abstract significant properties concerning the output of the algorithms which illustrate the difference  between applying one clustering method or another. Then, the practitioner should decide which properties are important for the problem under study and choose the algorithm which satisfies them.

In \cite{M1} and \cite{M2} we proved the chaining, unchaining and stability properties of $SL(\alpha)$ and 
$SL^*(\alpha)$. Herein, we complete the work  by analyzing which of the abstract characteristic properties of $SL$ are also satisfied by these two methods and which properties are lost by adding the unchaining condition. 

We start with the characterization of $SL$ $HC$ by Carlsson and Mémoli. In the original characterization of $SL$ from \cite{CM}, properties $I$, $II$ and $III$ characterize $SL$, see \ref{Th: 18}. However, we introduce some alternative definitions to offer a better picture of the difference. 

We define that a $HC$ method $\mathfrak{T}$ satisfies property $A2$ if adding points to the input will never make increase the distance between previous points in the output. This is the case of $SL$. In fact, we prove that $A2$ together with $A1$ (the algorithm leaves ultrametric spaces invariant) and $A3$ (the distance between two points in the output is at least the minimal $\varepsilon>0$ so that there exists a $\varepsilon$-chain between them in the input), offers an alternative characterization of $SL$. See corollary \ref{Cor: characterization}.

Properties $A1$ and $A3$ are trivially satisfied by many algorithms, in particular $SL(\alpha)$, $SL^*(\alpha)$, complete linkage ($CL$) or average linkage ($AL$). Thus, $A2$ illustrates the difference between $SL$ and other methods as those mentioned above. Also, considering the original characterization from \cite{CM}, it is trivial to check that  $SL(\alpha)$, $SL^*(\alpha)$, $AL$ and $CL$ satisfy $I$ and $III$ but not $II$ and therefore, property $II$ can be used to distinguish those algorithms from $SL$. However, since $II$ implies $A2$, we believe that $A2$ is a better option for the task.

We also prove that other basic properties as being permutation invariant or rich are  satisfied by all of them.

%Stability and chaining properties are different on these methods. O
%We start by considering the characterization of $SL$ from \cite{CM}.

%I), II) and III). Also, permutation invariant, rich,....

%we define three new (closely related) properties: $A0$, $A1$, $A2$. These properties provide an alternative characterization of $SL$. Then, we prove that 

%We prove that the unique properties which are satisfied by $SL$ and not in 
%$SL(\alpha)$ and  $SL^*(\alpha)$, apart from stability and chaining-related properties, is $A1)$.

The results obtained  in \cite{M1}, \cite{M2} and herein are summarized in Table \ref{tabla}.

\section{Background and notation}\label{Section: background}

A dendrogram over a finite set is a nested family of partitions. This is usually represented as a rooted tree. 

Let $\mathcal{P}(X)$ denote the collection of all partitions of a finite set $X=\{x_1,...,x_n\}$. Then, a dendrogram can also be described as a map $\theta\co [0,\infty)\to \mathcal{P}(X)$ such that:
\begin{itemize}
	\item[1.] $\theta(0)=\{\{x_1\},\{x_2\},...,\{x_n\}\}$,
	\item[2.] there exists $T$ such that $\theta(t)=X$ for every $t\geq T$,
	\item[3.] if $r\leq s$ then $\theta(r)$ refines $\theta(s)$,
	\item[4.] for all $r$ there exists $\varepsilon >0$ such that $\theta(r)=\theta(t)$ for $t\in [r,r+\varepsilon]$.
\end{itemize}

Notice that conditions 2 and 4 imply that there exist $t_0<t_1<...<t_m$ such that $\theta(r)=\theta(t_{i-1})$ for every $r\in [t_{i-1},t_i)$, $i=0,1,...,m$ and $\theta(r)=\theta(t_{m})=\{X\}$ for every $r\in [t_m,\infty)$.

For any partition $\{B_1,...,B_k\}\in \mathcal{P}(X)$, the subsets $B_i$ are called \textit{blocks}.

Let $\mathcal{D}(X)$ denote the collection of all possible dendrograms over a finite set $X$. Given some $\theta \in \mathcal{D}(X)$, let us denote $\theta(t)=\{B_1^t,...,B_{k(t)}^t\}$. Therefore, the nested family of partitions is given by the corresponding partitions at $t_0,...,t_m$, this is, $\{B_1^{t_i},...,B_{k(t_i)}^{t_i}\}$, $i=0,...,m$.

An \emph{ultrametric space} is a metric space $(X,d)$ such that 
$d(x,y)\leq \max \{d(x,z),d(z,y)\}$
for all $x,y,z\in X$. Given a finite metric space $X$ let $\mathcal{U}(X)$ denote the set of all ultrametrics over $X$.

There is a well known equivalence between trees and ultrametrics. See \cite{Hug} and \cite{M-M} for a complete exposition of how to build categorical equivalences between them. In particular, this may be translated into an equivalence between dendrograms and ultrametrics:

Thus, a hierarchical clustering method $\mathfrak{T}$ can be presented as an algorithm whose output is a dendrogram or an ultrametric space.  Let  $\mathfrak{T}_{\mathcal{D}}(X,d)$ denote the dendrogram obtained by applying $\mathfrak{T}$ to a metric space $(X,d)$ and $\mathfrak{T}_{\mathcal{U}}(X,d)$ denote the corresponding ultrametric space.

In \cite{CM} the authors use a recursive procedure to redefine $SL$ $HC$, average linkage ($AL$) and complete linkage ($CL$) hierarchical clustering. The main advantage of this procedure is that it allows to merge more than two clusters at the same time. Therefore,  $AL$ and $CL$ $HC$ can be made \textit{permutation invariant}, meaning that the result of the hierarchical clustering does not depend on the order in which the points are introduced in the algorithm. In \cite{M1} we gave an alternative presentation of this recursive procedure as a first step to define $SL(\alpha)$ and $SL^*(\alpha)$. Let us recall here, for completeness, this presentation.

%\[L:=\{\ell \colon \mathcal{C}(X)\times \mathcal{C}(X) \to \br^+ \, | \, \ell \mbox{ is bounded and non-negative } \}\]
%where $\mathcal{C}(X)$ denotes the collection of all non-empty subsets of $X$.

For $x, y \, \in  \, X$ and any (standard) clustering $C$ of $X$, $x \sim_C y$ if $x$ and $y$ belong to the same cluster in $C$ and $x \not \sim_C y$, otherwise.

Two (standard) clusterings  $C = (C_1,...,C_k)$ of $(X, d)$ and $C' = (C'_1,...C'_k)$ 
of $(X',d')$ are isomorphic clusterings, denoted $(C,d) \cong (C',d')$, if there exists a bijection
$\phi : X \to X'$ such that for all $x, y \, \in  \, X$, $d(x, y) = d'(\phi(x),\phi(y))$ and $x \sim_C y$ if and
only if $\phi(x) \sim_{C'} \phi(y)$.

\begin{definicion} A \textit{linkage function} is a function 
\[\ell : \{(X_1,X_2, d) \, | \, d \mbox{ is a distance function over } X_1 \cup X_2 \} \to R^+ \]
such that,
\begin{itemize}
\item[1.] $\ell$ is \textit{representation independent}: For all $(X_1,X_2)$ and $(X'_1,X'_2)$, if $(X_1,X_2, d) \cong (X'_1,X'_2, d')$ (i.e.,
they are clustering-isomorphic), then $\ell(X_1,X_2, d) = \ell(X'_1,X'_2, d')$.
\item[2.] $\ell$ is \textit{monotonic}: For all $(X_1,X_2)$ if $d'$ is a distance function over $X_1\cup X_2$ such that for all $x \sim_{\{X_1,X_2\}}y$, $d(x, y) = d'(x, y)$ and for all $x \not \sim_{\{X_1,X_2\}}y$, $d(x, y) \leq d'(x, y)$ then $\ell(X_1,X_2, d')\geq \ell(X_1,X_2, d) $.
\item[3.] Any pair of clusters can be made arbitrarily distant: For any pair of data sets $(X_1, d_1)$, $(X_2, d_2)$, and
any $r$ in the range of $\ell$, there exists a distance function $d$ that extends $d_1$ and $d_2$ such that $\ell(X_1,X_2, d) >r$.
\end{itemize}
\end{definicion}

For technical reasons, it is usually assumed that a linkage function has a countable range. Say, the set of nonnegative
algebraic real numbers.

Some standard choices for $\ell$ are:

\begin{itemize}
	\item Single linkage: $\ell^{SL}(B,B')=\min_{(x,x')\in B\times B'}d(x,x')$ 
	\item Complete linkage: $\ell^{CL}(B,B')=\max_{(x,x')\in B\times B'}d(x,x')$
	\item Average linkage: $\ell^{AL}(B,B')=\frac{\sum_{(x,x')\in B\times B'}d(x,x')}{\#(B)\cdot \#(B')}$ where $\#(X)$ denotes the cardinality of the set $X$.
\end{itemize}

Let $(X,d)$ be a finite metric space where $X=\{x_1,...,x_n\}$. Let $L$ denote a family of linkage functions on $X$ and fix some linkage function $\ell\in L$. Then, let
%the standard linkage-based $HC$ method $\mathfrak{T}=\mathfrak{T}(\ell)$   %be such that
%for any $(X,d)$ with $X=\{x_1,...,x_n\}$ 
%then $\mathfrak{T}_{\mathcal{D}}(X,d)=\theta^\ell$ with $\theta^\ell$ defined as follows:
$\mathfrak{T}_\mathcal{D}(X,d)=\theta^\ell$ be as follows:
 
\begin{itemize}
	\item[1.] Let $\Theta_0:=\{x_1,...,x_n\}$ and $R_0=0$.
	\item[2.] For every $i\geq 1$, while $\Theta_{i-1}\neq \{X\}$, let $R_{i}:=\min\{\ell(B,B')\, | \, B,B'\in \Theta_{i-1}, \ B\neq B'\}$. Then, let $G_{R_{i}}^\ell$ be a graph whose vertices are the blocks of $\Theta_{i-1}$ and such that there is an edge joining $B$ and $B'$ if and only if $\ell(B,B')\leq R_{i}$. 
	\item[3.] Consider the equivalence relation $B\sim_{\ell,R} B'$ if and only if $B,B'$ are in the same connected component of $G_R^\ell$. Then, $\Theta_{i}=\frac{\Theta_{i-1}}{\sim_{\ell,R_{i}}}$.
	\item[4.] Finally, let $\theta^\ell\colon [0,\infty)\to \mathcal{P}(X)$ be such that $\theta^\ell(r):=\Theta_{i(r)}$ with $i(r):=\max\{i\, | \, R_i\leq r\}$.
\end{itemize} 

In \cite{M2} the methods defined by applying this algorithm for some linkage function $\ell$ are called \emph{standard linkage-based} $HC$ methods.

Let us now recall the definition of $SL(\alpha)$ and $SL^*(\alpha)$. Further explanations, figures and easy examples of applications of these methods can be found in \cite{M1}.

Given a finite metric space $(X,d)$, let $F_t(X,d)$ be the Rips (or Vietoris-Rips) complex of $(X,d)$. Let us recall that the Rips complex of a metric space $(X,d)$ is a simplicial complex whose vertices are the points of $X$ and $[v_0,...,v_k]$ is a simplex of $F_t(X,d)$ if and only if $d(v_i,v_j)\leq t$ for every $i,j$. Given any subset $Y\subset X$, by $F_t(Y)$ we refer to the subcomplex of $F_t(X)$ defined by the vertices in $Y$. A simplex $[v_0,...,v_k]$ has dimension $k$. The dimension of a simplicial complex is the maximal dimension of its simplices.

Let $X=\{x_1,...,x_n\}$. Let $d_{ij}:=d(x_i,x_j)$ and  $D:=\{t_i \, : \, 0\leq i \leq m\}=\{d_{ij} \, : \, 1\leq i, j \leq n\}$ with $t_i<t_j$ $\forall \, i<j$ where ``<'' denotes the order of the real numbers. Clearly, $t_0=0$.
%Let $(D,<)$ be the ordered set of distances between points in $X$: 

%Let $\mathfrak{T}^{SL(\alpha)}_\mathcal{D}(X,d)=\theta_{X,\alpha}$, or simply, $\theta_\alpha$ (the dendrogram defined by $SL(\alpha)$) 

Let the dendrogram defined by $SL(\alpha)$, $\mathfrak{T}^{SL(\alpha)}_\mathcal{D}(X,d)=\theta_{X,\alpha}$ or simply $\theta_\alpha$, be as follows:
%Let $\theta(t_{i-1})=\{B_1,...,B_m\}$.

\begin{itemize}

	\item[1)] Let $\theta_\alpha(0):=\{\{x_1\},...,\{x_n\}\}$ and $\theta_\alpha(t):=\theta_\alpha(0)$ $\forall t< t_1$. Now, for every $i$, given $\theta_\alpha[t_{i-1},t_i)=\theta_\alpha(t_{i-1})=\{B_1,...,B_m\}$, we define recursively  $\theta_\alpha$ on the interval $[t_i,t_{i+1})$ as follows: 

	\item[2)] Let $G_\alpha^{t_i}$ be a graph with vertices $\mathcal{V}(G_\alpha^{t_i}):=\{B_1,...,B_m\}$ and edges $\mathcal{E}(G_\alpha^{t_i}):=\{B_j,B_k\}$ such that the following conditions hold:
		\begin{itemize}
			\item[i)] $\min\{d(x,y)\, | \, x\in B_j,\ y\in B_k\}\leq t_i$.
			\item[ii)] there is a simplex $\Delta \in F_{t_i}(B_j\cup  B_k)$  such that $\Delta \cap B_j\neq \emptyset$, $\Delta \cap B_k\neq \emptyset$ and $\alpha \cdot dim(\Delta)\geq \min\{dim (F_{t_i}(B_j)), dim (F_{t_i}(B_k))\}$. 
		\end{itemize}

%By an abuse of notation, we may write $B_k$ to refer both to the block of $\theta(t_{i-1})$ and to the vertex of $G_\alpha^{t_i}$.

	\item[3)] Let us define a relation, $\sim_{t_i,\alpha}$ as follows.
		
Let $B_j\sim_{t_i,\alpha}B_k$ if $B_j,B_k$ belong to the same connected component of the graph $G_\alpha^{t_i}$. Then, $\sim_{t_i,\alpha}$ induces an equivalence relation.

	\item[4)] For every  $t\in [t_i,t_{i+1})$, $\theta_\alpha(t):=\theta_\alpha(t_{i-1})/\sim_{t_i,\alpha}$.
\end{itemize}

This construction is generalized in \cite{M2} to define the class of \emph{almost-standard linkage-based} $HC$ methods.

\begin{obs}\label{Remark: necessary chain} Notice that if two points $x,x'$ belong to the same block  of $\theta_{\alpha}(t_i)$  then, necessarily, there exists a $t_i$-chain, $x=x_0,x_1,...,x_n=x'$ joining them. In particular, if $x_j\in B_j\in \theta_\alpha(t_{i-1})$, $j=0,...,n$, the corresponding edges $\{B_{j-1},B_j\}$, $1\leq j \leq n$, satisfy condition $ii$. This is immediate by construction. 
\end{obs}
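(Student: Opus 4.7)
The plan is to argue by induction on the step index $i$, following the recursive definition of $\theta_\alpha$ on the finite set of values $t_0<t_1<\dots<t_m$. The base case $i=0$ is vacuous, since $\theta_\alpha(t_0)=\theta_\alpha(0)$ consists of singletons, so any two points in the same block coincide and the empty chain suffices.

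For the inductive step, I would fix $x,x'$ lying in the same block $B\in\theta_\alpha(t_i)$. By step (4) of the construction, $B$ is a union of blocks $B_{j_0},\dots,B_{j_n}\in\theta_\alpha(t_{i-1})$ lying in the same connected component of $G_\alpha^{t_i}$. Without loss of generality I may take $x\in B_{j_0}$, $x'\in B_{j_n}$, and (after reindexing along a path in the component) assume that $\{B_{j_{\ell-1}},B_{j_\ell}\}$ is an edge of $G_\alpha^{t_i}$ for $1\leq \ell\leq n$. Condition (i) in step (2) then gives, for each such edge, points $y_\ell\in B_{j_{\ell-1}}$ and $z_\ell\in B_{j_\ell}$ with $d(y_\ell,z_\ell)\leq t_i$. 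The inductive hypothesis, applied at level $t_{i-1}$, yields a $t_{i-1}$-chain inside each individual block $B_{j_\ell}$ joining the relevant endpoints (the pair $z_\ell,y_{\ell+1}$, or $x,y_1$ at the start, or $z_n,x'$ at the end); since $t_{i-1}<t_i$, this is also a $t_i$-chain. Concatenating the internal chains with the bridges $y_\ell z_\ell$ produces the desired $t_i$-chain from $x$ to $x'$.

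The ``in particular'' clause then follows at once from the way this chain was built: each consecutive pair $x_{j-1},x_j$ in the resulting chain either lies inside the same block $B_{j_\ell}\in\theta_\alpha(t_{i-1})$, in which case $B_{j-1}=B_j$ and there is nothing to check, or it consists of a bridge $y_\ell,z_\ell$ between two consecutive blocks $B_{j_{\ell-1}},B_{j_\ell}$ on the path in $G_\alpha^{t_i}$. In the latter case $\{B_{j-1},B_j\}$ is by definition an edge of $G_\alpha^{t_i}$, and therefore satisfies condition (ii) of step (2).

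As the author notes, there is no genuine obstacle here; the only thing to keep straight is the two-layer structure of the argument, namely the outer path through the components of $G_\alpha^{t_i}$ (which produces the edges satisfying (ii)) and the inner chains inside blocks at level $t_{i-1}$ supplied by the induction.
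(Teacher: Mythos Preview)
Your argument is correct and is precisely the natural unfolding of the phrase ``immediate by construction,'' which is all the paper offers here. The induction on $i$, the use of a path in $G_\alpha^{t_i}$ to obtain the bridges via condition~(i), and the inductive $t_{i-1}$-chains inside each block of $\theta_\alpha(t_{i-1})$ are exactly what the construction dictates, so there is nothing to add.
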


Let the dendrogram defined by $SL^*(\alpha)$, $\mathfrak{T}^{SL^*(\alpha)}_\mathcal{D}(X,d)=\theta^*_{X,\alpha}$ or simply $\theta^*_\alpha$,  be as follows:

%Let $\theta(t_{i-1})=\{B_1,...,B_m\}$.

\begin{itemize}
	\item[1)] Let $\theta^*_\alpha(0):=\{\{x_1\},...,\{x_n\}\}$ and $\theta^*_\alpha(t):=\theta^*_\alpha(0)$ $\forall t< t_1$. 
	
	Now, given $\theta^*_\alpha[t_{i-1},t_i)=\theta^*(t_{i-1})=\{B_1,...,B_m\}$, we define recursively  $\theta^*_\alpha$ on the interval $[t_i,t_{i+1})$ as follows:

	\item[2)] Let $G_\alpha^{t_i}$ be a graph with vertices $\mathcal{V}(G_\alpha^{t_i}):=\{B_1,...,B_m\}$ and edges $\mathcal{E}(G_\alpha^{t_i}):=\{B_j,B_k\}$ such that the following conditions hold:
		\begin{itemize}
			\item[i)] $\min\{d(x,y)\, | \, x\in B_j,\ y\in B_k\}\leq t_i$.
			\item[ii)] there is a simplex $\Delta \in F_{t_i}(B_j\cup  B_k)$  such that $\Delta \cap B_j\neq \emptyset$, $\Delta \cap B_k\neq \emptyset$ and $\alpha \cdot dim(\Delta)\geq \min\{dim (F_{t_i}(B_j)), dim (F_{t_i}(B_k))\}$. 
		\end{itemize}

By an abuse of the notation, we may write $B$ to refer both to the block of $\theta(t_{i-1})$ and to the vertex of $G_\alpha^{t_i}$.

	\item[3)] Let us define a relation, $\sim_{t_i,\alpha}$ between the blocks as follows.

Let $cc(G_\alpha^{t_i})$ be the set of connected components of the graph $G_\alpha^{t_i}$. Let $A\in cc(G_\alpha^{t_i})$ with $A=\{B_{j_1},...,B_{j_r}\}$.

Let us call \emph{big blocks} of $A$ those blocks such that
\begin{equation}\label{Big blocks} \alpha \cdot \#(B_{j_k})\geq \max_{1\leq l \leq r}\{ \#(B_{j_l})\}.\end{equation}

The rest of blocks of $A$ are called \emph{small blocks}.

Let $H_\alpha(A)$ be the subgraph of $A$ whose vertices are the big blocks and $S_\alpha(A)$ be the subgraph of $A$ whose vertices are the small blocks.
	
Then, $B_{j_k}\sim_{t_i,\alpha} B_{j_{k'}}$ if one of the following conditions holds:

		\begin{itemize}
			\item[iii)] $\exists \, C\in cc(H_\alpha(A))$ such that $B_{j_k},B_{j_{k'}}\in C$. 
			\item[iv)] $B_{j_k}\in C \in cc(H_\alpha(A))$, $B_{j_{k'}}\in C'\in  cc(S_\alpha(A))$ and there is no big block in $A\backslash C$ adjacent to any block in $C'$.

		\end{itemize}

Then, $\sim_{t_i,\alpha}$ induces an equivalence relation whose classes are contained in the connected components of $G_\alpha^{t_i}$.

	\item[4)] For every  $t\in [t_i,t_{i+1})$, $\theta^*_\alpha(t):=\theta^*_\alpha(t_{i-1})/\sim_{t_i,\alpha}$.
\end{itemize}

\begin{obs}\label{Remark: connected} At step $iii$, if $H_{\alpha}(A)$ is connected, then $B_{j_1}\cup \cdots \cup B_{j_r}$ defines a block of $\theta_\alpha(t_i)$. 
\end{obs}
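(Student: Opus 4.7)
The plan is to unfold the equivalence relation $\sim_{t_i,\alpha}$ of step $3$ under the hypothesis that $H_\alpha(A)$ is connected and verify that every block in $A$ ends up in the same class. For the $SL(\alpha)$ dendrogram $\theta_\alpha$ the conclusion is automatic, since $A$ is by definition a connected component of $G_\alpha^{t_i}$ and the blocks of $\theta_\alpha(t_i)$ are unions over such components; the substantive content of the remark is the corresponding statement for $\theta^*_\alpha$, so that under this hypothesis both methods agree on the component $A$.

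First I would note that if $H_\alpha(A)$ is connected, then $cc(H_\alpha(A))=\{H_\alpha(A)\}$, and in particular the unique component $C:=H_\alpha(A)$ contains every big block of $A$. By clause $iii$ any two big blocks of $A$ are therefore $\sim_{t_i,\alpha}$-equivalent. Next, for an arbitrary small block $B_{j_k}$ sitting in some $C'\in cc(S_\alpha(A))$, the set $A\setminus C$ is made up exclusively of small blocks, so the hypothesis of clause $iv$, namely that there is no big block in $A\setminus C$ adjacent to any block in $C'$, is satisfied vacuously. Hence $B_{j_k}$ is $\sim_{t_i,\alpha}$-equivalent to every big block of $A$ via clause $iv$.

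Combining the two steps and using transitivity shows that the whole of $A$ collapses into a single $\sim_{t_i,\alpha}$-class. Since by construction each class of $\sim_{t_i,\alpha}$ is contained in a connected component of $G_\alpha^{t_i}$, this class cannot extend beyond $A$, and consequently $B_{j_1}\cup\cdots\cup B_{j_r}$ is exactly one block of $\theta^*_\alpha(t_i)$, which coincides with the corresponding block of $\theta_\alpha(t_i)$. The only delicate point is the vacuous verification of clause $iv$, which amounts to observing that connectedness of $H_\alpha(A)$ forces every big block of $A$ to lie in $C$; everything else is a direct application of the definitions.
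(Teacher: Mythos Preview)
Your argument is correct and is precisely the unfolding of the definitions that the paper leaves implicit: the remark is stated without proof, and the later invocations (in the proofs of Propositions~\ref{Prop: n-1} and~\ref{Prop: ultram_1}) confirm that its intended content concerns $\theta^*_\alpha$, so your clarification of the $\theta_\alpha$/$\theta^*_\alpha$ notation is apt. The key step---that clause $iv$ is satisfied vacuously because $A\setminus C$ contains no big blocks when $C=H_\alpha(A)$---is exactly the point the reader is expected to see.
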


\begin{obs} \label{Remark: necessary chain 2} Notice that Remark \ref{Remark: necessary chain} still applies. In fact, if two points $x,x'$ belong to the same block  of $\theta^*_{\alpha}(t_i)$  then, necessarily, there exists a $t_i$-chain, $x=x_0,x_1,...,x_n=x'$ joining them so that if $x_j\in B_j\in \theta^*_\alpha(t_{i-1})$, $j=0,...,n$, the corresponding edges $\{B_{j-1},B_j\}$, 
$1\leq j \leq n$, satisfy condition $ii$.  
\end{obs}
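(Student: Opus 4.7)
The plan is to mimic the structure of Remark \ref{Remark: necessary chain} for $SL(\alpha)$ and argue by induction on the index $i$. For $i=0$ the blocks of $\theta^*_\alpha(t_0)$ are singletons, so there is nothing to verify. Assume the statement holds at levels $0,1,\ldots,i-1$, and suppose $x,x'$ lie in the same block of $\theta^*_\alpha(t_i)$. Let $B,B'\in \theta^*_\alpha(t_{i-1})$ be the blocks containing $x$ and $x'$ respectively.

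The crucial observation, already recorded in step $3$ of the definition of $SL^*(\alpha)$, is that the equivalence classes of $\sim_{t_i,\alpha}$ are contained in the connected components of $G_\alpha^{t_i}$. Hence $B$ and $B'$ lie in a common connected component $A$ of $G_\alpha^{t_i}$, so there is a sequence of blocks $B=C_0,C_1,\ldots,C_\ell=B'$ with $\{C_{k-1},C_k\}\in \mathcal{E}(G_\alpha^{t_i})$ for each $k$. By the definition of the edges of $G_\alpha^{t_i}$, each such pair automatically satisfies both conditions $i$ and $ii$.

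To assemble the required $t_i$-chain of points, for each $k=1,\ldots,\ell$ use condition $i$ to pick $y_k^-\in C_{k-1}$ and $y_k^+\in C_k$ with $d(y_k^-,y_k^+)\leq t_i$; set $y_0^+:=x$ and $y_{\ell+1}^-:=x'$. Within each block $C_k\in \theta^*_\alpha(t_{i-1})$, the inductive hypothesis supplies a $t_{i-1}$-chain joining $y_k^+$ to $y_{k+1}^-$, which is automatically a $t_i$-chain since $t_{i-1}<t_i$. Concatenating these intra-block chains with the inter-block jumps $y_k^-\to y_k^+$ yields a $t_i$-chain from $x$ to $x'$. By construction, every transition between distinct consecutive $\theta^*_\alpha(t_{i-1})$-blocks along this chain occurs along one of the edges $\{C_{k-1},C_k\}$ of $G_\alpha^{t_i}$, so condition $ii$ is satisfied throughout.

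The main obstacle, if any, is purely notational: one must keep track of two different levels of blocks simultaneously, since the chain lives at the level of points but the edge conditions live at the level of $\theta^*_\alpha(t_{i-1})$-blocks, whereas the inductive step produces intermediate blocks coming from $\theta^*_\alpha(t_{i-2})$. Structurally the argument is identical to the one for $SL(\alpha)$, and the refinement introduced in step $3$ of the $SL^*(\alpha)$ definition (the big/small dichotomy together with the rules $iii$ and $iv$) plays no role beyond the fact that it keeps the classes of $\sim_{t_i,\alpha}$ inside the connected components of $G_\alpha^{t_i}$, which is precisely what the argument needs.
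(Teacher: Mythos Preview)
Your argument is correct and is precisely the elaboration the paper has in mind: the paper records this observation as ``immediate by construction'' (pointing back to Remark~\ref{Remark: necessary chain}), relying on the fact that the $\sim_{t_i,\alpha}$-classes are contained in the connected components of $G_\alpha^{t_i}$, which is exactly the key point you isolate. You have simply spelled out the induction and the concatenation of intra-block chains with inter-block jumps that the paper leaves implicit.
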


\section{Single linkage hierarchical clustering}\label{Section: characterization}

In this section we recall some basic properties and the characterization of $SL$ $HC$ from \cite{CM}. We also propose some alternatives. Our first intention is to find significant properties to compare $SL$ and $SL(\alpha)$.

%which are common and which are not between
%properties of  and which are the properties that these methods do not share.
%study which are the common properties of $SL$ and $SL(\alpha)$ and which are the properties that these methods do not share. 

\subsection{Characterization of $SL$}

Carlsson and M\'{e}moli provided the following axiomatic characterization of $SL$ $HC$:

Let us recall that given a finite metric space $(X,d)$, $sep(X,d):=min_{x\neq x'}d(x,x')$.

\begin{teorema}\cite[Theorem 18]{CM}\label{Th: 18} Let $\mathfrak{T}$ be a hierarchical clustering method such that:

\begin{itemize}
	\item[(I)] $\mathfrak{T}_\mathcal{U}\Big(\{p,q\}, \left( 
\begin{array}{ccc}
\delta &  0 \\ 
0 & \delta \end{array}
\right)\Big)=
\Big(\{p,q\}, \left( 
\begin{array}{ccc}
\delta &  0 \\ 
0 & \delta \end{array}
\right)\Big)$ for all $\delta >0$.

	\item[(II)] Given two finite metric spaces $X,Y$ and $\phi\colon X\to Y$ such that $d_X(x,x')\geq d_Y(\phi(x),\phi(x'))$ for all $x,x'\in X$, then 
					\[u_X(x,x')\geq u_Y(\phi(x),\phi(x'))\]
	also holds for all $x,x'\in X$, where $\mathfrak{T}_{\mathcal{U}}(X,d_X)=(X,u_X)$ and $\mathfrak{T}_{\mathcal{U}}(Y,d_Y)=(Y,u_Y)$.

	\item[(III)] For any metric space $(X,d)$, \[u(x,x')\geq sep(X,d) \mbox{ for all } x\neq x'\in X\] where $\mathfrak{T}_{\mathcal{U}}(X,d)=(X,u)$. 
\end{itemize}

Then, %$\Im=\Im^*$, that is, 
$\mathfrak{T}$ is exactly \textit{single linkage hierarchical clustering}.
\end{teorema}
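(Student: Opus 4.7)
The strategy is to show that for every finite metric space $(X,d)$ and every pair $x,x'\in X$, the ultrametric $u_X$ associated with $\mathfrak{T}_{\mathcal{U}}(X,d)$ coincides with the single linkage ultrametric $u^{SL}_X$; by the dendrogram--ultrametric equivalence this identifies $\mathfrak{T}$ with $SL$ $HC$. I would prove the two inequalities $u_X\leq u^{SL}_X$ and $u_X\geq u^{SL}_X$ separately, relying mainly on axioms $(I)$ and $(II)$, with $(III)$ supplying control at small scales.

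For the inequality $u_X(x,x')\leq u^{SL}_X(x,x')$, set $\delta:=u^{SL}_X(x,x')$ and fix a $\delta$-chain $x=x_0,x_1,\dots,x_n=x'$ in $(X,d)$, so that $d(x_i,x_{i+1})\leq \delta$. For each consecutive pair define $\phi_i\co Z_\delta\to X$ by $p\mapsto x_i$, $q\mapsto x_{i+1}$, where $Z_\delta$ is the two point space from axiom $(I)$, for which $u_{Z_\delta}(p,q)=\delta$. This map does not increase distances, so axiom $(II)$ gives $u_X(x_i,x_{i+1})\leq u_{Z_\delta}(p,q)=\delta$. Since $u_X$ is an ultrametric, the strong triangle inequality along the chain then yields $u_X(x,x')\leq \max_i u_X(x_i,x_{i+1})\leq \delta$.

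For the reverse inequality, I would partition $X=X_1\sqcup X_2$ with $X_1:=\{y\in X\,|\, u^{SL}_X(x,y)<\delta\}$, so $x\in X_1$ and $x'\in X_2$. The key claim is that $d(a,b)\geq \delta$ whenever $a\in X_1$ and $b\in X_2$: otherwise, a chain from $x$ to $a$ of maximum edge length below $\delta$ could be extended through the edge $(a,b)$ to reach $b$ at scale below $\delta$, contradicting $b\in X_2$. Hence the map $\phi\co X\to Z_\delta$ sending $X_1$ to $p$ and $X_2$ to $q$ is distance-non-increasing, and $(II)$ combined with $(I)$ yields $u_X(x,x')\geq u_{Z_\delta}(p,q)=\delta$.

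The most delicate step is the partition argument in the reverse inequality: one must verify that the cut between $X_1$ and $X_2$ induced by the reachability relation is indeed supported at distance at least $\delta$, and this is where the chain description of $u^{SL}_X$ is used in a non-trivial way. Axiom $(III)$ plays an auxiliary role, guaranteeing that $u_X$ separates distinct points at least as strongly as $sep(X,d)$ and thereby ruling out degenerate outputs of $\mathfrak{T}$ at the smallest scales; in particular it handles the boundary case in which $\delta$ coincides with $sep(X,d)$. Running the two inequalities for every pair $x,x'$ produces $u_X=u^{SL}_X$ for every finite metric space, which is exactly the assertion $\mathfrak{T}=SL$.
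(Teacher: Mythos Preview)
Your argument for $u_X\leq u^{SL}_X$ is exactly the one the paper sketches (and proves in the slightly sharper form of Proposition~\ref{Prop: at most 2}): take a $\delta$-chain, apply $(I)$ to each two-point segment, push forward via $(II)$, and conclude with the ultrametric inequality.

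Your argument for $u_X\geq u^{SL}_X$ is correct but genuinely different from the route taken in \cite{CM} and recorded here as Proposition~\ref{Prop: at least}. Carlsson--M\'emoli quotient $X$ by the $SL$ equivalence at level $t<\delta$, obtain a finite metric space whose separation exceeds $t$, apply $(II)$ to the quotient map, and then invoke $(III)$ to bound the output ultrametric on the quotient from below by its separation; letting $t\uparrow\delta$ gives $u_X(x,x')\geq\delta$. Your projection onto the two-point space $Z_\delta$ bypasses the full quotient: because the cut between $X_1$ and $X_2$ already sits at $d$-distance $\geq\delta$, the map to $Z_\delta$ is distance-non-increasing, and then $(II)$ together with $(I)$ alone gives $u_X(x,x')\geq\delta$. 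This is a cleaner argument, and in fact it shows that axiom $(III)$ is redundant in the presence of $(I)$ and $(II)$.

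That last observation makes your closing paragraph about $(III)$ misleading. You write that $(III)$ ``handles the boundary case in which $\delta$ coincides with $sep(X,d)$'', but nothing in your own partition argument uses $(III)$ at any scale, boundary or otherwise: the inequality $d(a,b)\geq\delta$ across the cut is a purely combinatorial consequence of the chain description of $u^{SL}_X$. If you want to match the decomposition the paper cites (namely $(II)+(III)\Rightarrow u\geq u^{SL}$), you would need to replace the two-point target by the full $SL$-quotient and then genuinely appeal to $(III)$; otherwise, simply state that your argument renders $(III)$ superfluous.
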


\textbf{Notation}: For the particular case of $SL$ $HC$, if there is no need to distinguish the metric space, let us denote $\mathfrak{T}^{SL}_{\mathcal{D}}(X,d)=\theta_{SL}$ and $\mathfrak{T}^{SL}_{\mathcal{U}}(X,d)=(X,u_{SL})$. 

\textbf{Notation}: Given two metrics $d,d'$ defined on a set $X$, let us denote $d\leq d'$ if $d(x,x')\leq d'(x,x')$ $\forall \, x,x'\in X$.

The following propositions follow immediately from the proof of \cite[Theorem 18]{CM}.

\begin{prop}\label{Prop: at least} For any metric space $(X,d)$, if $\mathfrak{T}$ satisfies conditions $II$ and $III$, then  $u \geq u_{SL}$.
\end{prop}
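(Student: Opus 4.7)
The plan is to prove $u(x,x') \geq u_{SL}(x,x')$ pointwise by, for each $\delta$ strictly below $u_{SL}(x,x')$, constructing a distance-non-increasing map from $(X,d)$ to a two-point space so that properties $II$ and $III$ together force $u(x,x') > \delta$. The key insight is that $u_{SL}(x,x')$ is the minimax over chains from $x$ to $x'$, so any value just below it witnesses a genuine separation of $x$ from $x'$ in the $\delta$-chain graph, and this separation can be promoted to a map into $\{p,q\}$.

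I would fix $x\neq x'\in X$ and set $\varepsilon:=u_{SL}(x,x')$. Given $0\leq \delta<\varepsilon$, no chain from $x$ to $x'$ can consist entirely of edges of length $\leq \delta$, since the minimax over such chains equals $\varepsilon>\delta$. Letting $C$ be the set of points that are $\delta$-chain-connected to $x$, this means $x\in C$ but $x'\notin C$, and every ``cross'' pair satisfies $d(a,b)>\delta$ for $a\in C$ and $b\notin C$ (otherwise $b$ would join $C$).

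Next I would build the auxiliary space: take $Y=\{p,q\}$, define $\phi\colon X\to Y$ by $\phi(a)=p$ if $a\in C$ and $\phi(a)=q$ otherwise, and set
\[
d_Y(p,q):=\min\{d(a,b):a\in C,\ b\in X\setminus C\}>\delta.
\]
A direct check shows $d_X(a,b)\geq d_Y(\phi(a),\phi(b))$: if $\phi(a)=\phi(b)$ the right-hand side is zero, and otherwise $(a,b)$ is a cross pair, so $d(a,b)\geq d_Y(p,q)$ by definition of the minimum. Property $II$ applied to $\phi$ then yields $u(x,x')\geq u_Y(p,q)$, and Property $III$ on $Y$ gives $u_Y(p,q)\geq sep(Y,d_Y)=d_Y(p,q)>\delta$, so $u(x,x')>\delta$.

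Since this holds for every $\delta<\varepsilon$, we conclude $u(x,x')\geq\varepsilon=u_{SL}(x,x')$, as required. I do not anticipate any serious obstacle: the whole argument is organized around the quotient-like map collapsing $X$ onto its two $\delta$-components as seen from $x$, and the only subtle point is choosing $d_Y(p,q)$ both large enough to realize the bound from Property $III$ and small enough that $\phi$ remains non-expanding. The minimum cross-distance does both simultaneously, and finiteness of $X$ means the family of inequalities $u(x,x')>\delta$ for $\delta<\varepsilon$ immediately upgrades to $u(x,x')\geq\varepsilon$ without any limit argument.
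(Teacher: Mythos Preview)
Your argument is correct and is essentially the Carlsson--M\'emoli argument the paper defers to: collapse the $\delta$-chain component of $x$ and its complement to a two-point space, use $II$ on the quotient map to compare $u(x,x')$ with the two-point ultrametric, and use $III$ to bound the latter below by the cross-separation. The only cosmetic difference is that you quantify over all $\delta<u_{SL}(x,x')$ and pass to the supremum, whereas one can exploit finiteness of $X$ to choose a single $\delta$ just below $u_{SL}(x,x')$ (namely the largest distance in $X$ strictly smaller than $u_{SL}(x,x')$), in which case the minimum cross-distance equals $u_{SL}(x,x')$ exactly and no limiting step is needed; but this is a matter of taste, not substance.
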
  

This is, the ultrametric distance between two points is at least the minimal length $\varepsilon$ for which there is a $\varepsilon$-chain joining them.

It is readily seen that if $u \geq u_{SL}$, then $\mathfrak{T}$ satifies $III$.

\begin{prop}\label{Prop: at most} If $\mathfrak{T}$ satisfies conditions $I$ and $II$, then  $u_{SL}\geq u$.
\end{prop}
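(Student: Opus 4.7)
The plan is to show, for arbitrary $x, x' \in X$, that $u(x,x') \leq u_{SL}(x, x')$, by reducing to the two-point case handled by condition $I$ and transporting the resulting bound back to $X$ via condition $II$. Writing $\varepsilon := u_{SL}(x, x')$, the definition of single linkage produces an $\varepsilon$-chain $x = x_0, x_1, \ldots, x_n = x'$ in $X$ with $d(x_i, x_{i+1}) \leq \varepsilon$ for each $i$. Since $\mathfrak{T}_{\mathcal{U}}(X,d) = (X, u)$ is an ultrametric space by construction, ultrametricity of $u$ reduces the goal to bounding each consecutive link $u(x_i, x_{i+1}) \leq \varepsilon$; then $u(x,x') \leq \max_i u(x_i, x_{i+1}) \leq \varepsilon$ follows immediately.

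To obtain such a bound for a fixed index $i$, I would introduce the two-point metric space $Y_i := (\{p, q\}, \delta_i)$ with $\delta_i(p, q) := d(x_i, x_{i+1})$, together with the map $\phi_i \colon Y_i \to X$ sending $p \mapsto x_i$ and $q \mapsto x_{i+1}$. Then $\delta_i(p, q) = d(\phi_i(p), \phi_i(q))$, so the non-expansiveness hypothesis of condition $II$, applied with $Y_i$ as source and $X$ as target, is met trivially, and $II$ yields $u_{Y_i}(p, q) \geq u(\phi_i(p), \phi_i(q)) = u(x_i, x_{i+1})$. Condition $I$ in turn identifies $u_{Y_i}(p, q) = \delta_i(p, q) = d(x_i, x_{i+1}) \leq \varepsilon$. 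Chaining these two inequalities gives $u(x_i, x_{i+1}) \leq \varepsilon$, which by the reduction above finishes the argument.

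The only real subtlety is orienting $\phi_i$ correctly. Condition $II$ controls the output ultrametric of the \emph{source} of the map in terms of that of the target, so the reference two-point space must sit on the source side of $\phi_i$ in order for condition $I$ to translate into the desired upper bound on $u(x_i, x_{i+1})$; the tempting alternative of a contraction $X \to \{p,q\}$ would instead produce a lower bound, which is useless here. Once this orientation is fixed, no further ingredients enter beyond ultrametricity of $\mathfrak{T}_{\mathcal{U}}(X,d)$.
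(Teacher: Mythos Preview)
Your argument is correct and mirrors the paper's approach exactly: the paper does not spell out a separate proof of this proposition (it cites \cite{CM}), but the proof it gives for the closely related Proposition~\ref{Prop: at most 2} is precisely your chain-plus-two-point reduction, with the inclusion $\{x_{i-1},x_i\}\hookrightarrow X$ playing the role of your $\phi_i$ (and $A2$ replacing $II$). Your remark about orienting $\phi_i$ from the two-point space into $X$ is exactly the point, and nothing further is needed.
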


In fact, Proposition \ref{Prop: at most} can be improved introducing the following condition.

\begin{itemize}
\item[A2)]  Let $(Y,d)$ be a metric space and $X\subset Y$. If $i\colon X \to Y$ is the inclusion map, then $u_X(x,x')\geq u_Y(i(x),i(x'))$. 
\end{itemize}

This is, by adding points to the space we may make the ultrametric distance smaller but never bigger. Clearly, $II \Rightarrow A2$. The proof of Proposition \ref{Prop: at most}, \cite{CM},  can be trivially adapted to obtain the following.

\begin{prop}\label{Prop: at most 2} If $\mathfrak{T}$ satisfies conditions $I$ and $A2$, then  $u_{SL}\geq u$.
\end{prop}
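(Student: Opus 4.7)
The plan is to mimic the proof of Proposition \ref{Prop: at most} from \cite{CM}, replacing the use of condition $II$ with condition $A2$ applied to the inclusions of two-point subspaces. Fix $x, x' \in X$ and set $\delta := u_{SL}(x,x')$. By the standard characterization of single linkage, $\delta$ is the minimum value for which there exists a $\delta$-chain $x = x_0, x_1, \ldots, x_n = x'$ in $X$, i.e., a sequence with $d(x_{i-1}, x_i) \leq \delta$ for every $i = 1, \ldots, n$.

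For each consecutive pair, I would consider the two-point subspace $Y_i := \{x_{i-1}, x_i\} \subset X$ endowed with the restricted metric $d_i(x_{i-1}, x_i) = d(x_{i-1}, x_i)$. Since $d(x_{i-1}, x_i) \leq \delta$, a direct application of condition $I$ to $(Y_i, d_i)$ gives that the output ultrametric on $Y_i$ satisfies $u_{Y_i}(x_{i-1}, x_i) = d(x_{i-1}, x_i) \leq \delta$. Now apply condition $A2$ to the inclusion $Y_i \hookrightarrow X$ to deduce
\[
u_X(x_{i-1}, x_i) \leq u_{Y_i}(x_{i-1}, x_i) \leq \delta.
\]

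Finally, I would use the fact that $u_X$ is an ultrametric. Iterating the strong triangle inequality along the chain yields
\[
u_X(x, x') \leq \max_{1 \leq i \leq n} u_X(x_{i-1}, x_i) \leq \delta = u_{SL}(x, x'),
\]
which is the desired inequality. There is no real obstacle: the only conceptual point worth flagging is that although $A2$ is strictly weaker than $II$ (which allows arbitrary distance-non-increasing maps), it suffices here because the comparison is made only with a two-point subspace, where $I$ already pins down the output ultrametric exactly, and the inclusion is precisely the kind of map that $A2$ controls.
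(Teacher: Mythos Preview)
Your proof is correct and follows essentially the same approach as the paper's own proof: both pick a $\delta$-chain realizing $u_{SL}(x,x')$, apply condition $I$ to each two-point subspace $\{x_{i-1},x_i\}$ to get $u_{\{x_{i-1},x_i\}}(x_{i-1},x_i)\leq\delta$, invoke $A2$ on the inclusion into $X$, and finish with the ultrametric inequality along the chain. The only differences are cosmetic (naming the two-point subspaces $Y_i$ versus $X_i$) and your added closing remark explaining why $A2$ suffices in place of $II$.
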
  

\begin{proof} Let $x,x'\in (X,d)$ such that $u_{SL}(x,x')=\delta$. Then, there exists a $\delta$-chain $x=x_0,x_1,...,x_n=x'$ such that $\max_id(x_{i-1},x_i)=\delta$. By $I$, if $X_i=\{x_{i-1},x_i\}$, $\mathfrak{T}(X_i,d|_{X_i})=(X_i,d|_{X_i})$ and $u_{X_i}(x_{i-1},x_i)\leq \delta$. Then, by $A2$, $u(x_{i-1},x_i)\leq u_{X_i}(x_{i-1},x_i)\leq \delta$  and, by the properties of the ultrametric, $u(x,x')\leq \delta$. 
\end{proof}

Another natural condition to ask on a hierarchical clustering method is leaving invariant any ultrametric space:

\begin{itemize}
\item[A1)]  If $(X,d)$ is an ultrametric space, then $u(x,y)=d(x,y)$. 
\end{itemize}

This is, applying the hierarchical clustering method to an ultrametric space we obtain the same ultrametric space.

Also, it can be readily seen that $SL$ $HC$ satisfies $A1$: 

\begin{prop} \label{Prop: ultram_0} If $(X,d)$ is an ultrametric space, then $u_{SL}(x,y)=d(x,y)$ for every $x,y\in X$.
\end{prop}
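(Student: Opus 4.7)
The plan is to use the well-known characterization of the single linkage ultrametric as the minimum-bottleneck distance: $u_{SL}(x,y)$ equals the smallest $\varepsilon \geq 0$ for which there exists an $\varepsilon$-chain $x=x_0, x_1, \ldots, x_n=y$ in $X$, i.e. $u_{SL}(x,y) = \min_{\text{chains}} \max_{1 \leq i \leq n} d(x_{i-1},x_i)$. This follows directly from the recursive construction of $\theta^\ell$ for $\ell = \ell^{SL}$ given in Section \ref{Section: background}, since blocks are merged at stage $t_i$ exactly through the connected components of the graph $G_{R_i}^{\ell^{SL}}$, and being in the same block at scale $t$ is equivalent to being connected by a $t$-chain.

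Given this characterization, one inequality is immediate: the trivial chain $x,y$ of length $1$ shows $u_{SL}(x,y) \leq d(x,y)$. For the reverse inequality I would argue that in an ultrametric space, for \emph{every} chain $x=x_0,x_1,\ldots,x_n=y$ one has
\[
d(x,y) \leq \max_{1 \leq i \leq n} d(x_{i-1},x_i).
\]
This is the standard iterated strong triangle inequality: induction on $n$, using $d(x_0,x_n) \leq \max\{d(x_0,x_{n-1}), d(x_{n-1},x_n)\}$ at the inductive step. Taking the minimum over chains then gives $d(x,y) \leq u_{SL}(x,y)$, and combining the two inequalities yields equality.

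There is essentially no obstacle here; the only thing to be careful about is to justify the bottleneck characterization of $u_{SL}$ cleanly from the algorithmic description, but this is routine and is implicit in the way $SL$ was set up (and is already used, e.g., in the proof of Proposition \ref{Prop: at most 2}). The core content is the one-line observation that the strong triangle inequality propagates along chains, which makes the ultrametric distance itself optimal among chain-maxima.
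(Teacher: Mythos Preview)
Your proposal is correct and follows essentially the same route as the paper: both use the chain (bottleneck) characterization of $u_{SL}$, obtain $u_{SL}(x,y)\leq d(x,y)$ from the trivial chain, and then iterate the strong triangle inequality along an optimal chain to get the reverse inequality. The only cosmetic difference is that the paper simply asserts the chain description of $u_{SL}$ and writes the ultrametric step directly, while you spell out the induction and the justification of the chain description more explicitly.
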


\begin{proof} By definition, it is clear that $u_{SL}(x,y)\leq d(x,y)$ for every $x,y\in X$.

Let us see that, if $(X,d)$ is an ultrametric space, then $u_{SL}(x,y)\geq d(x,y)$. $u_{SL}(x,y)=\inf\{t \, | \, \mbox{there exists a $t$-chain joining } x \mbox{ to } y\}$. Suppose $u_{SL}(x,y)=t$ and let $x=x_0,x_1,...,x_n=y$ be a $t$-chain joining $x$ to $y$. By the properties of the ultrametric, $d(x_{i-1},x_{i+1})\leq \max\{d(x_{i-1},x_{i}),d(x_{i},x_{i+1})\}\leq t$ for every $1\leq i\leq n-1$. Therefore, $d(x,y)\leq t$ and $u_{SL}(x,y)\geq d(x,y)$.
\end{proof}

Richness property for $HC$ methods can be defined in the same way Kleinberg did for standard clustering. Thus, a $HC$ method $\mathfrak{T}$ satisfies \emph{richness property} if given a finite set $X$, for every $\theta \in \mathcal{D}(X)$ there exists a metric $d_\theta$ on $X$ such that $\mathfrak{T}_{\mathcal{D}}(X,d_\theta)=\theta$.

\begin{cor}\label{Cor: rich_1} $\mathfrak{T}^{SL}$ satisfies richness property.  
\end{cor}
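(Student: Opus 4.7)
The plan is to exploit the equivalence between dendrograms over $X$ and ultrametrics on $X$ recalled in Section \ref{Section: background}. Under that equivalence, richness of $\mathfrak{T}^{SL}_{\mathcal{D}}$ is the same statement as: for every ultrametric $u$ on $X$ there is a metric $d$ with $\mathfrak{T}^{SL}_{\mathcal{U}}(X,d)=(X,u)$. Once phrased this way, the natural candidate is simply $d:=u$ itself, since ultrametrics are in particular metrics.

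First I would fix a finite set $X$ and an arbitrary $\theta\in \mathcal{D}(X)$, and let $u_\theta$ be the ultrametric on $X$ corresponding to $\theta$ under the dendrogram-ultrametric equivalence. Then $(X,u_\theta)$ is a finite metric space, so $d_\theta:=u_\theta$ is a legitimate input for the algorithm.

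Next I would apply Proposition \ref{Prop: ultram_0} to the ultrametric space $(X,u_\theta)$: it yields $u_{SL}(x,y)=u_\theta(x,y)$ for all $x,y\in X$, i.e.\ $\mathfrak{T}^{SL}_{\mathcal{U}}(X,u_\theta)=(X,u_\theta)$. Translating back through the equivalence gives $\mathfrak{T}^{SL}_{\mathcal{D}}(X,d_\theta)=\theta$, which is exactly richness.

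The main point to be cautious about, but not really an obstacle, is that the dendrogram-ultrametric equivalence is used in both directions consistently: on the input side it identifies $\theta$ with $u_\theta$, and on the output side it identifies the output dendrogram of $\mathfrak{T}^{SL}$ with the output ultrametric $u_{SL}$. This consistency is built into the definitions of $\mathfrak{T}_{\mathcal{D}}$ and $\mathfrak{T}_{\mathcal{U}}$ from the preliminaries, so no extra argument is needed. Essentially the whole content of the corollary is the invariance property $A1$, already verified for $SL$ in Proposition \ref{Prop: ultram_0}.
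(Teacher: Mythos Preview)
Your proposal is correct and is exactly the argument the paper has in mind: the corollary is placed immediately after Proposition~\ref{Prop: ultram_0} precisely because richness follows at once from property $A1$ via the dendrogram--ultrametric equivalence, by taking $d_\theta$ to be the ultrametric associated to $\theta$.
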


It is trivial to check that $A1  \Rightarrow I$ (and $A3  \Rightarrow III$). Therefore, by Proposition \ref{Prop: at most 2}, we obtain also the following alternative characterization of $SL$ $HC$.

\begin{cor}\label{Cor: characterization} Let $\mathfrak{T}$ be a hierarchical clustering method such that:
\begin{itemize}
\item[A1)] If $(X,d)$ is an ultrametric space, then $u_X(x,y)=d(x,y)$.
\item[A2)] Let $(Y,d)$ be a metric space and $X\subset Y$. If $i\colon X \to Y$ is the inclusion map, 
then $u_X(x,x')\geq u_Y(i(x),i(x'))$.
\item[A3)] $u\geq u_{SL}$.
\end{itemize}

Then, $\mathfrak{T}$ is exactly SL HC.
\end{cor}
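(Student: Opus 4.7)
The plan is to combine the two inequalities $u \geq u_{SL}$ and $u_{SL} \geq u$ to conclude $u = u_{SL}$, which identifies $\mathfrak{T}$ with $SL$ $HC$ at the level of the output ultrametric (and hence at the level of the dendrogram, via the equivalence between dendrograms and ultrametrics recalled in Section \ref{Section: background}).

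First I would dispatch the easy direction: condition A3 is literally the inequality $u \geq u_{SL}$, so nothing needs to be done there. For the reverse inequality, I would observe that A1 implies condition I of Theorem \ref{Th: 18}, because any two-point metric space is automatically an ultrametric; in particular the two-point space with off-diagonal distance $\delta$ is left invariant by $\mathfrak{T}$. With I in hand, I can directly invoke Proposition \ref{Prop: at most 2}, whose hypotheses are precisely I and A2, to conclude that $u_{SL} \geq u$.

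Combining the two inequalities gives $u = u_{SL}$ for every finite metric space $(X,d)$. Since the hierarchical clustering method is completely determined by the ultrametric output $\mathfrak{T}_\mathcal{U}(X,d)$, this forces $\mathfrak{T} = \mathfrak{T}^{SL}$.

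There is essentially no obstacle: the corollary is a repackaging of Proposition \ref{Prop: at most 2} together with the implication $A1 \Rightarrow I$ and the fact that A3 is a restatement of $u \geq u_{SL}$. The only thing worth flagging is that the implication $A1 \Rightarrow I$ uses the (trivial) observation that any two-point metric space is an ultrametric, so no extra content is needed beyond what has already been proved.
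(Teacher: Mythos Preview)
Your proposal is correct and mirrors the paper's own argument exactly: the paper also notes that $A1 \Rightarrow I$ (trivially, since a two-point space is an ultrametric), invokes Proposition \ref{Prop: at most 2} to get $u_{SL}\geq u$, and combines this with $A3$ to conclude $u=u_{SL}$. There is nothing to add or change.
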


\subsection{Stability of $SL$}

Let us recall the definition of Gromov-Hausdorff distance from \cite{BBI}. See also \cite{Gr}.

Let $(X,d_X)$ and $(Y,d_Y)$ two metric spaces. A correspondence (between $A$ and $B$) is a subset $R\in A\times B$ such that
\begin{itemize}
	\item $\forall \, a\in A$, there exists $b\in B$ s.t. $(a,b)\in R$
	\item $\forall \, b\in B$, there exists $a\in A$ s.t. $(a,b)\in R$
\end{itemize}

Let $\mathcal{R}(A,B)$ denote the set of all possible correspondences between $A$ and $B$.

Let $\Gamma_{X,Y}\colon X\times Y \times X \times Y\to \br^+$ given by \[(x,y,x',y')\mapsto |d_X(x,x')-d_Y(y,y')|.\]

Then, the \textit{Gromov-Hausdorff distance} between $X$ and $Y$  is:

\[d_{\mathcal{GH}}(X,Y):=\frac{1}{2} \inf_{R\in \mathcal{R}(X,Y)} \sup_{(x,y)(x',y')\in R}\Gamma_{X,Y}(x,y,x',y').\]

The \textit{Gromov-Hausdorff metric} gives a notion of distance between metric spaces. One of the advantages of this metric is that it is well defined for metric spaces of different cardinality. %See \ref{Apendix} for the formal definition. 
In \cite{CM} this metric is used to prove that $\mathfrak{T}^{SL}$ holds some stability under small perturbations on the metric. The authors prove that if two metric spaces are close (in the Gromov-Hausdorff metric) then the corresponding ultrametric spaces obtained as output of the algorithm are also close. In \cite{M2} we studied Gromov-Hausdorff stability of linkage-based $HC$ methods defining the following conditions.

\textbf{Notation:} Let $(\mathcal{M},d_{GH})$ denote the set of finite metric spaces with the Gromov-Hausdorff metric and $(\mathcal{U},d_{GH})$ denote the set of finite ultrametric spaces with the Gromov-Hausdorff metric. 

\begin{definicion} A  $HC$ method $\mathfrak{T}$ is \emph{semi-stable in the Gromov-Hausdorff sense}  if for any sequence of finite metric spaces $((X_k,d_k))_{k\in \bn}$ in $(\mathcal{M},d_{GH})$ such that $\lim_{k\to \infty}(X_k,d_k)=(U,d)\in \mathcal{U}$ then $\lim_{k\to \infty}\mathfrak{T}_\mathcal{U}(X_k,d_k)=  \mathfrak{T}_{\mathcal{U}}(U,d)$.
\end{definicion}

\begin{definicion} A $HC$ method $\mathfrak{T}$ is \emph{stable in the Gromov-Hausdorff sense}  if 
\[\mathfrak{T}_\mathcal{U}\co (\mathcal{M},d_{GH})\to (\mathcal{U},d_{GH})\] is continuous.
\end{definicion}

%In $\cite{CM}$ this is proved to equivalent to the following:
%Carlsson and Memoli define in $\cite{CM}$ the notion of stability in the Gromov-Hausdorff sense as follows. 

%More precisely, a $HC$ method $\mathfrak{T}$ is \textit{stable in the Gromov-Hausdorff sense} if for any pair of finite metric spaces $(X,d_X)$, $(Y,d_Y)$, 
%\[d_{\mathcal{GH}}((X,d_X),(Y,d_Y)\geq d_{\mathcal{GH}}((\mathfrak{T}_\mathcal{U}(X,d_X)),(\mathfrak{T}_\mathcal{U}(Y,d_Y))).\] 

A hierarchical clustering method is said to be \textit{permutation invariant} if it yields the same dendrogram under permutation of the points in the sample this is, if  the output of the algorithm does not depend on the order by which the data is introduced. Although this is not the easiest way to check this property, it may be noticed that being stable in the Gromov-Hausdorff sense implies being permutation invariant.

The following result is a consequence of \cite[Proposition 26]{CM}. 

\begin{prop} $SL$ $HC$ is stable in the Gromov-Hausdorff sense. In particular, it is semi-stable and permutation invariant. 
\end{prop}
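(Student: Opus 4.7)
The claim has two parts: stability (continuity of $\mathfrak{T}^{SL}_{\mathcal{U}}$), and the two consequences (semi-stability and permutation invariance). My plan is to establish stability via a quantitative $1$-Lipschitz bound
\[ d_{GH}\bigl(\mathfrak{T}^{SL}_{\mathcal{U}}(X,d_X),\mathfrak{T}^{SL}_{\mathcal{U}}(Y,d_Y)\bigr)\leq d_{GH}\bigl((X,d_X),(Y,d_Y)\bigr), \]
and then read off the two corollaries.

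For the Lipschitz bound I would use the chain characterization of $u_{SL}$ implicit in Proposition \ref{Prop: at most 2} (namely $u_{SL}(x,x')=\min\{\varepsilon\ :\ \text{there exists an }\varepsilon\text{-chain from }x\text{ to }x'\}$). Fix any correspondence $R\in\mathcal{R}(X,Y)$ with
\[ \tfrac{1}{2}\sup_{(x,y),(x',y')\in R}\Gamma_{X,Y}(x,y,x',y')=\eta. \]
Given $(x,y),(x',y')\in R$, set $t:=u_{SL}^X(x,x')$ and choose a $t$-chain $x=x_0,x_1,\ldots,x_n=x'$ in $X$. For each intermediate $x_i$ pick some $y_i\in Y$ with $(x_i,y_i)\in R$, taking $y_0=y$ and $y_n=y'$. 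Then $d_Y(y_{i-1},y_i)\leq d_X(x_{i-1},x_i)+2\eta\leq t+2\eta$, so $y=y_0,\ldots,y_n=y'$ is a $(t+2\eta)$-chain in $Y$ and $u_{SL}^Y(y,y')\leq u_{SL}^X(x,x')+2\eta$. By symmetry $|u_{SL}^X(x,x')-u_{SL}^Y(y,y')|\leq 2\eta$, so $R$ has distortion at most $2\eta$ as a correspondence between $(X,u_{SL}^X)$ and $(Y,u_{SL}^Y)$. Taking the infimum over $R$ yields the $1$-Lipschitz bound and hence continuity.

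Semi-stability is then immediate: if $(X_k,d_k)\to (U,d)$ in $(\mathcal{M},d_{GH})$ with $(U,d)\in\mathcal{U}$, then by $A1$ (Proposition \ref{Prop: ultram_0}) we have $\mathfrak{T}^{SL}_{\mathcal{U}}(U,d)=(U,d)$, and the Lipschitz bound gives $d_{GH}(\mathfrak{T}^{SL}_{\mathcal{U}}(X_k,d_k),(U,d))\leq d_{GH}((X_k,d_k),(U,d))\to 0$. Permutation invariance is even simpler: permuting the labeling of the points of $(X,d)$ produces an isometric metric space, which has Gromov-Hausdorff distance $0$ from the original; by continuity the corresponding $SL$ ultrametric outputs are also at $d_{GH}$-distance $0$, hence isometric; since the underlying point set is the same, the two ultrametrics coincide, so the dendrogram does not depend on the ordering.

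The only step that takes any real work is the Lipschitz estimate, and even that is essentially the content of \cite[Prop.~26]{CM}; the main care needed is making sure the chain-transport argument respects the endpoints by fixing $y_0=y$ and $y_n=y'$ inside the correspondence and then invoking the ultrametric (max-of-edges) structure of $u_{SL}$ to collapse the chain bound into a single inequality.
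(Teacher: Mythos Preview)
Your stability argument is correct and is precisely the chain-transport proof of \cite[Proposition~26]{CM}; the paper itself gives no proof here and simply cites that result, so your approach coincides with what the paper relies on. The semi-stability deduction is also fine.

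One small gap in the permutation-invariance step: from $d_{GH}=0$ you conclude the two output ultrametric spaces are isometric, and then write ``since the underlying point set is the same, the two ultrametrics coincide.'' Isometric plus same underlying set does not by itself force equality of the metrics---the isometry need not be the identity. The clean fix is already implicit in your Lipschitz argument: take $R$ to be the diagonal correspondence $\{(x,x):x\in X\}$, which has distortion $0$ between the two orderings of $(X,d)$; your chain-transport bound then yields $|u_{SL}^{(1)}(x,x')-u_{SL}^{(2)}(x,x')|\le 0$ for every pair, so the two ultrametrics are literally equal. Alternatively, permutation invariance is immediate from the order-free chain formula $u_{SL}(x,x')=\min\{\varepsilon:\text{there is an }\varepsilon\text{-chain from }x\text{ to }x'\}$ that you are already using.
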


\section{Basic properties of $SL(\alpha)$ and $SL^*(\alpha)$}\label{Section: Basic properties}

In this section, we study some basic properties on $SL(\alpha)$ and $SL^*(\alpha)$. In particular, we check those seen at 
Section \ref{Section: characterization} . %We find that these methods share with $SL$ most of the properties analyzed except $A2$ and being stable in the Gromov-Hausdorff sense. 

The following result is clear from the definition.

\begin{prop} $SL(\alpha)$ and $SL^*(\alpha)$ are permutation invariant algorithms.
\end{prop}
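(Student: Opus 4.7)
The plan is to show that the entire construction of both $\theta_\alpha$ and $\theta^*_\alpha$ is canonical, in the sense that every object built along the way depends only on the metric data $(X,d)$ as an abstract metric space and not on any enumeration of $X$. Since permutation invariance amounts to saying that the dendrogram is determined by the isometry class of the input (under the identity map), this is what we need.

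First, I would observe that the set $D=\{t_0<t_1<\cdots<t_m\}$ of pairwise distances is by definition the set of values taken by $d$, so it is intrinsic to $(X,d)$. I would then proceed by induction on $i$ to show that $\theta_\alpha(t_{i-1})$ is determined by $(X,d)$, with the base case $\theta_\alpha(0)=\{\{x\}:x\in X\}$ being obvious. For the inductive step, the vertex set of $G_\alpha^{t_i}$ is the blocks of $\theta_\alpha(t_{i-1})$, which is intrinsic by hypothesis. The two conditions $i)$ and $ii)$ defining the edges refer only to distances between points and to simplices in the Rips complex $F_{t_i}$ of the metric space (the dimension of $F_{t_i}(Y)$ for $Y\subset X$ being itself an invariant of $(Y,d|_Y)$), so $G_\alpha^{t_i}$ is an intrinsic graph. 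Its connected components are then intrinsic, hence so is $\theta_\alpha(t_i)=\theta_\alpha(t_{i-1})/\sim_{t_i,\alpha}$.

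For $SL^*(\alpha)$ the same induction applies through the construction of $G_\alpha^{t_i}$ and its connected components $A\in cc(G_\alpha^{t_i})$. I would then note that the classification into big and small blocks defined by inequality (\ref{Big blocks}) depends only on the cardinalities $\#(B_{j_k})$, which are metric-invariants of the partition and hence intrinsic. Consequently the subgraphs $H_\alpha(A)$ and $S_\alpha(A)$, their connected components, and the adjacency conditions used in $iii)$ and $iv)$ are all determined by $(X,d)$ alone, so $\theta^*_\alpha(t_i)$ is intrinsic.

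Finally, since the breakpoints $t_i$ are intrinsic and the partition at every breakpoint is intrinsic, the whole dendrogram $\theta_\alpha$ (respectively $\theta^*_\alpha$) is a function of $(X,d)$ as an abstract finite metric space; in particular it does not depend on how we list the points of $X$. There is no real obstacle here: both algorithms are built entirely from metric and combinatorial primitives (distances, cardinalities, Rips complexes, connected components) that are manifestly invariant under relabeling, so the only thing to be careful about is carrying the induction cleanly through the nested constructions in the $SL^*(\alpha)$ case.
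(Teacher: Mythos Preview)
Your argument is correct: you verify by induction that every ingredient of the construction (the set $D$ of distances, the Rips complexes, the graphs $G_\alpha^{t_i}$, the big/small classification, the subgraphs $H_\alpha(A)$ and $S_\alpha(A)$, and their connected components) depends only on the abstract metric space $(X,d)$ and not on any labeling of its points. This is exactly what permutation invariance means, and the induction goes through without difficulty.

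The paper, however, gives no proof at all: it simply declares the result ``clear from the definition'' and moves on. So there is no methodological comparison to make; you have written out in full the routine check that the paper leaves implicit. If anything, your version is more honest, since the $SL^*(\alpha)$ construction has enough moving parts (the big/small distinction, the auxiliary subgraphs) that spelling out why each is label-independent is not entirely trivial, even if no single step is hard.
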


\begin{prop}\label{Prop: n/2} Let $(X,d)$ be a finite metric space with $X=\{x_1,...,x_n\}$. If $\alpha\geq \frac{n-2}{2}$, then 
$\mathfrak{T}^{SL}(X)=\mathfrak{T}^{SL(\alpha)}(X)$.
\end{prop}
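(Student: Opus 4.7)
The only difference between the construction of $\mathfrak{T}^{SL}$ (viewed through the general recursive scheme of Section \ref{Section: background}) and that of $\mathfrak{T}^{SL(\alpha)}$ lies in the extra condition $ii$ imposed on the edges of the graph $G_\alpha^{t_i}$: beyond the minimal-distance condition $i$ we require the existence of a simplex $\Delta\in F_{t_i}(B_j\cup B_k)$ meeting both blocks and satisfying $\alpha\cdot \dim(\Delta)\geq \min\{\dim(F_{t_i}(B_j)),\dim(F_{t_i}(B_k))\}$. The plan is therefore to show that, under the assumption $\alpha\geq (n-2)/2$, condition $ii$ is automatically satisfied whenever condition $i$ holds; this will force the two algorithms to build identical graphs (and hence identical partitions) at every threshold $t_i$.

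I would argue by induction on $i$, assuming $\mathfrak{T}^{SL}_\mathcal{D}(X,d)$ and $\theta_\alpha$ agree on $[0,t_i)$, and hence that the block decompositions $\{B_1,\dots,B_m\}$ at level $t_{i-1}$ coincide; the base case $i=0$ is immediate from the definitions. Fix two such blocks $B_j,B_k$ satisfying condition $i$ at threshold $t_i$, so that there exist $x\in B_j$, $y\in B_k$ with $d(x,y)\leq t_i$. The edge $\Delta=[x,y]$ is a $1$-simplex of $F_{t_i}(B_j\cup B_k)$ with $\Delta\cap B_j\neq \emptyset$ and $\Delta\cap B_k\neq \emptyset$, so it is a legitimate candidate for condition $ii$, and one has $\alpha\cdot \dim(\Delta)=\alpha$.

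The key estimate is then the elementary bound $\dim(F_{t_i}(B))\leq \#(B)-1$ (a simplex in $F_{t_i}(B)$ has vertices in $B$), together with the fact that $B_j$ and $B_k$ are disjoint blocks of the $n$-point set $X$, so $\#(B_j)+\#(B_k)\leq n$ and
\[
\min\{\#(B_j),\#(B_k)\}\leq \frac{\#(B_j)+\#(B_k)}{2}\leq \frac{n}{2}.
\]
Combining these yields $\min\{\dim(F_{t_i}(B_j)),\dim(F_{t_i}(B_k))\}\leq \frac{n-2}{2}\leq \alpha$, so condition $ii$ is verified by $\Delta=[x,y]$. Consequently $\{B_j,B_k\}$ is an edge of $G_\alpha^{t_i}$ if and only if $\ell^{SL}(B_j,B_k)\leq t_i$; the edge sets of the $SL(\alpha)$-graph and of the $SL$-graph at threshold $t_i$ coincide, the induced equivalence relations are the same, and therefore $\theta_\alpha(t_i)=\mathfrak{T}^{SL}_\mathcal{D}(X,d)(t_i)$, closing the induction.

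The only non-routine point is really the cardinality/dimension bound, and this is where the constant $(n-2)/2$ comes from; once that inequality is in place the rest is a bookkeeping comparison of the two recursive schemes (in particular one must note that skipping thresholds $t_i$ on which no merging occurs produces the same dendrogram as the $R_i$-parametrisation used for $SL$ in Section \ref{Section: background}).
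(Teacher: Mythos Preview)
Your proof is correct and follows essentially the same route as the paper: an induction on the threshold index, showing that whenever condition $i$ holds the $1$-simplex $[x,y]$ already witnesses condition $ii$ via the cardinality bound $\min\{\#(B_j),\#(B_k)\}\leq n/2$ and the trivial estimate $\dim(F_{t_i}(B))\leq \#(B)-1$. You are slightly more explicit than the paper in naming the witnessing simplex and in noting the harmless discrepancy between the $t_i$- and $R_i$-parametrisations, but the argument is the same.
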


\begin{proof} Let $\mathfrak{T}^{SL}_\mathcal{D}(X)=\theta_{SL}$, $\mathfrak{T}^{SL(\alpha)}_\mathcal{D}(X)=\theta_{\alpha}$.

We know that $\theta_{SL}(t_{0})=\theta_\alpha(t_{0})$. Suppose $\theta_{SL}(t_{i-1})=\theta_\alpha(t_{i-1})$.

Let us see that for $\alpha\geq \frac{n-2}{2}$, condition $i$ already implies $ii$ and the edges of the graph $G_\alpha^{t_i}$ are those defined by condition $i$. Let $B_1,B_2$ two blocks in $\theta_\alpha(t_{i-1})$ such that $\min\{d(x,y)\, | \, x\in B_1,\ y\in B_2\}\leq t_i$. For any simplex $\Delta$, $\alpha \cdot dim(\Delta)\geq \alpha$ and $\min\{\#(B_1),\#(B_2)\}\leq \frac{n}{2}$. Since $\alpha\geq \frac{n-2}{2}$, $\alpha \cdot dim(\Delta)\geq \alpha\geq \min\{\#(B_1)-1,\#(B_2)-1\}\geq \min\{dim (F_{t_i}(B_1)),dim (F_{t_i}(B_2))\}$.

Then, $\theta_{SL}(t_{i})=\theta_\alpha(t_{i})$.
\end{proof}

\begin{prop}\label{Prop: n-1} Let $(X,d)$ be a finite metric space with $X=\{x_1,...,x_n\}$. If $\alpha\geq n-1$, 
then $\mathfrak{T}^{SL}(X)=\mathfrak{T}^{SL^*(\alpha)}(X)$.
\end{prop}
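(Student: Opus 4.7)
The plan is to mimic the induction argument used in Proposition \ref{Prop: n/2}, adapting it to the extra layer of structure (big/small blocks, conditions $iii$ and $iv$) introduced by $SL^*(\alpha)$. Writing $\mathfrak{T}^{SL}_{\mathcal{D}}(X)=\theta_{SL}$ and $\mathfrak{T}^{SL^*(\alpha)}_{\mathcal{D}}(X)=\theta^*_{\alpha}$, I would prove by induction on $i$ that $\theta_{SL}(t_i)=\theta^*_{\alpha}(t_i)$ for every $i=0,1,\ldots,m$. The base case $i=0$ is immediate, so assume $\theta_{SL}(t_{i-1})=\theta^*_{\alpha}(t_{i-1})=\{B_1,\ldots,B_m\}$.

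First I would check that the edges of $G_{\alpha}^{t_i}$ coincide with the $SL$-edges (i.e., condition $ii$ is automatic whenever condition $i$ holds). Since $n-1\geq \frac{n-2}{2}$, this is exactly the calculation already carried out in the proof of Proposition \ref{Prop: n/2}, applied verbatim: for any simplex $\Delta$ one has $\alpha\cdot \dim(\Delta)\geq \alpha \geq \min\{\#(B_j)-1,\#(B_k)-1\}\geq \min\{\dim F_{t_i}(B_j),\dim F_{t_i}(B_k)\}$. So the graph $G^{t_i}_{\alpha}$ from step~$2$ of $SL^*(\alpha)$ is the same graph used in the $SL$ construction, and its connected components are precisely the blocks of $\theta_{SL}(t_i)$.

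Next I would show that steps $3$ and $4$ of the $SL^*(\alpha)$ algorithm merge each connected component into a single block. Fix a connected component $A=\{B_{j_1},\ldots,B_{j_r}\}$ of $G^{t_i}_{\alpha}$. If $r=1$ nothing has to be merged and the block is preserved, as in $\theta_{SL}$. If $r\geq 2$, then because $\{B_{j_1},\ldots,B_{j_r}\}$ are disjoint subsets of $X$, one has $\max_{l}\#(B_{j_l})\leq n-(r-1)\leq n-1$, while $\#(B_{j_k})\geq 1$ for every $k$. Hence
\[
\alpha\cdot \#(B_{j_k})\geq (n-1)\cdot 1\geq \max_{1\leq l\leq r}\#(B_{j_l}),
\]
so every block of $A$ is a big block, i.e., $H_{\alpha}(A)=A$ and $S_{\alpha}(A)=\emptyset$. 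Since $A$ is connected, $H_{\alpha}(A)$ is connected, and condition $iii$ (or Remark \ref{Remark: connected}) identifies all blocks of $A$ under $\sim_{t_i,\alpha}$. Thus $B_{j_1}\cup\cdots\cup B_{j_r}$ is a single block of $\theta^*_{\alpha}(t_i)$, which coincides with the corresponding block of $\theta_{SL}(t_i)$.

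No serious obstacle is expected: the bound $\alpha\geq n-1$ is tight enough that every block becomes big in every connected component (worst case $r=2$, a singleton joined to a block of size $n-1$), and condition $iv$ never needs to be invoked because there are no small blocks. The two steps above close the induction and give $\mathfrak{T}^{SL}(X)=\mathfrak{T}^{SL^*(\alpha)}(X)$.
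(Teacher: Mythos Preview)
Your proof is correct and follows essentially the same strategy as the paper's: induction on $i$, reduction of condition $ii$ to condition $i$ via Proposition~\ref{Prop: n/2}, and then showing that $H_\alpha(A)$ is connected so that Remark~\ref{Remark: connected} applies. The only cosmetic difference is that you prove directly that every block of $A$ is big (using $\max_l \#(B_{j_l})\le n-(r-1)\le n-1$), whereas the paper argues by contradiction that a disconnected $H_\alpha(A)$ would force at least three blocks and hence a small block of size strictly less than $\tfrac{n-2}{n-1}<1$.
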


\begin{proof} Let $\mathfrak{T}^{SL}_\mathcal{D}(X)=\theta_{SL}$ and $\mathfrak{T}^{SL^*(\alpha)}_\mathcal{D}(X)=\theta^*_{\alpha}$.

We know that $\theta_{SL}(t_{0})=\theta^*_{\alpha}(t_0)$. Suppose $\theta_{SL}(t_{i-1})=\theta^*_\alpha(t_{i-1})$.

As we saw in the proof of Proposition \ref{Prop: n/2}, since $\alpha\geq n-1 > \frac{n-2}{2}$, condition $i$ already implies $ii$ and the edges of the graph $G_\alpha^{t_i}$ are those defined by condition $i$. 

Now, let $A=\{B_1,...,B_r\}$ be any connected component of $G_\alpha^{t_i}$. 

If the subgraph $H_\alpha(A)$ is not connected, then there are at least three blocks $B_{i_1}$, $B_{i_2}$, $B_{i_3}$ in $A$, such that $1\leq \#(B_{i_1})<\frac{1}{\alpha}\max_{1\leq l \leq r}\{ \#(B_l)\}$ and $\#(B_{i_2}),\#(B_{i_3})\geq \frac{1}{\alpha}\max_{1\leq l \leq r}\{ \#(B_l)\}$. Trivially, $\max_{1\leq l \leq r}\{ \#(B_l)\}\leq n-2$. Hence, there is a contradiction since $1\leq \frac{n-2}{ \alpha}\leq \frac{n-2}{n-1}<1$. 

Thus, $H_\alpha(A)$ is connected and, as we saw in Remark \ref{Remark: connected}, all the blocks in $A$ are identified. Therefore, $\theta^*_\alpha(t_i)=\theta_{SL}(t_i)$.
\end{proof}

\textbf{Notation:} Let $X$ be a finite metric space. Let us recall that if there is no ambiguity on the metric space we denote $\mathfrak{T}^{SL}_\mathcal{D}(X)=\theta_{SL}$,  $\eta(\theta_{SL})=u_{SL}$ and $\mathfrak{T}^{SL(\alpha)}_\mathcal{D}(X)=\theta_{\alpha}$. Let us denote $\eta(\theta_\alpha)=u_\alpha$.  Similarly, let $\mathfrak{T}^{SL^*(\alpha)}_\mathcal{D}(X)=\theta^*_\alpha$ and $\eta(\theta^*_\alpha)=u^*_\alpha$.

\begin{prop}\label{Prop: mayor} $u_{SL} \leq u_{\alpha}$ and $u_{SL} \leq u^*_\alpha$ for every $\alpha \in \bn$ (i.e. $\mathfrak{T}^{SL(\alpha)}$ and $\mathfrak{T}^{SL^*(\alpha)}$ satisfy $A3$). 
\end{prop}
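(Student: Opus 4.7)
The plan is to argue that both inequalities follow essentially by unpacking the definitions and invoking Remarks \ref{Remark: necessary chain} and \ref{Remark: necessary chain 2}, which already do all the heavy lifting. The key observation is that $u_{SL}(x,x')$ equals the infimum of $t$ such that there is a $t$-chain in $X$ joining $x$ to $x'$; so to show $u_{SL}(x,x')\leq u_{\alpha}(x,x')$ it suffices to exhibit, at the level $t=u_{\alpha}(x,x')$, some $t$-chain between $x$ and $x'$, and likewise for $u^*_\alpha$.

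First I would fix $x,x'\in X$ and set $t:=u_{\alpha}(x,x')$. By the equivalence between dendrograms and ultrametrics, $t$ is precisely the smallest value at which $x$ and $x'$ lie in a common block of $\theta_\alpha$, so $t$ must coincide with one of the scales $t_i$ at which $\theta_\alpha$ changes. At that scale, $x$ and $x'$ belong to the same block of $\theta_\alpha(t_i)$. Remark \ref{Remark: necessary chain} then provides a $t_i$-chain $x=x_0,x_1,\dots,x_n=x'$. Since $u_{SL}(x,x')$ is bounded above by the maximum edge length in any such chain, we obtain $u_{SL}(x,x')\leq t_i = u_{\alpha}(x,x')$, which is the desired first inequality.

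For the second inequality the argument is identical: set $t:=u^*_\alpha(x,x')$, note that $x$ and $x'$ lie in a common block of $\theta^*_\alpha(t_i)$ for $t_i=t$, and apply Remark \ref{Remark: necessary chain 2} to extract a $t_i$-chain between them. The same bound then yields $u_{SL}(x,x')\leq u^*_\alpha(x,x')$.

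There is no real obstacle here; the whole content of the proposition is absorbed into Remarks \ref{Remark: necessary chain} and \ref{Remark: necessary chain 2}, which assert that $SL(\alpha)$ and $SL^*(\alpha)$ can only merge blocks along actual chains of points whose consecutive distances are at most the current scale. The only thing to be a bit careful about is to justify that the level $u_\alpha(x,x')$ (resp.\ $u^*_\alpha(x,x')$) equals some $t_i$ in the finite set $D$, so that the remarks apply directly; this is immediate from the construction of $\theta_\alpha$ and $\theta^*_\alpha$ on the intervals $[t_{i-1},t_i)$.
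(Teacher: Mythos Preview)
Your proof is correct and follows essentially the same approach as the paper: both invoke Remarks \ref{Remark: necessary chain} and \ref{Remark: necessary chain 2} to produce a $t$-chain at the level where $x,x'$ first share a block of $\theta_\alpha$ (resp.\ $\theta^*_\alpha$), and then use the characterization of $u_{SL}$ to conclude. The paper's version is just more compressed, phrasing it as ``same block of $\theta_\alpha(t)$ implies same $t$-component implies same block of $\theta_{SL}(t)$,'' but the content is identical.
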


\begin{proof} As we saw at remarks \ref{Remark: necessary chain} and \ref{Remark: necessary chain 2}, if two points $x,x'\in X$ belong to the same block of $\theta_\alpha(t)$ (resp. $\theta^*_\alpha(t)$), they belong, in particular, to the same $t$-component of $X$ and, therefore, to the same block of $\theta_{SL}(t)$. Thus, $u_{SL}(x,x')\leq u_{\alpha}(x,x')$ (resp. $u_{SL}(x,x')\leq u^*_{\alpha}(x,x')$).
\end{proof}

%We could have equivalently stated Proposition \ref{Prop: mayor} using $A0)$: $\mathfrak{T}^{SL(\alpha)}$ and $\mathfrak{T}^{SL^*(\alpha)}$ hold $A0)$.  

\begin{prop}\label{Prop: ultram_1} If $(X,d)$ is an ultrametric space, then $\theta_{\alpha}=\theta_{SL}=\theta^*_{\alpha}$ for every $\alpha$.
\end{prop}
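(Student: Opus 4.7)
The plan is to proceed by induction on the threshold index $i$, showing simultaneously that $\theta_{SL}(t_i)=\theta_\alpha(t_i)=\theta^*_\alpha(t_i)$. The base case $i=0$ is immediate since all three dendrograms start at the discrete partition. In the inductive step, I assume the three partitions coincide at $t_{i-1}$ and call their common family of blocks $\{B_1,\ldots,B_m\}$; the task is to show equality at $t_i$.

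The key structural ingredient is that in an ultrametric space, for every $s\geq 0$ the relation ``$d(x,y)\leq s$'' is itself an equivalence relation. Combined with Proposition \ref{Prop: ultram_0}, this forces every block $B_j$ of $\theta_{SL}(t_{i-1})$ to be a closed ultrametric ball of radius $\leq t_{i-1}$, so every pair of points in $B_j$ lies at distance $\leq t_{i-1}<t_i$ and $F_{t_i}(B_j)$ is the full simplex on $B_j$, of dimension $\#(B_j)-1$. Moreover, if $B_j$ and $B_k$ contain any pair of points at distance $\leq t_i$, then the ultrametric inequality promotes this to: every cross-distance between $B_j$ and $B_k$ is $\leq t_i$, so $F_{t_i}(B_j\cup B_k)$ is also the full simplex on $B_j\cup B_k$, of dimension $\#(B_j)+\#(B_k)-1$. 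Choosing $\Delta$ to be this top simplex verifies condition $ii$ whenever $\alpha\geq 1$, since
\[\alpha\cdot (\#(B_j)+\#(B_k)-1)\;\geq\;\#(B_j)+\#(B_k)-1\;\geq\;\min\{\#(B_j)-1,\#(B_k)-1\}.\]

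Consequently, $G_\alpha^{t_i}$ coincides with the $SL$ block-graph on $\{B_1,\ldots,B_m\}$ (the graph defined using only condition $i$), so its connected components realise exactly the $SL$ merges, giving $\theta_\alpha(t_i)=\theta_{SL}(t_i)$. For $\theta^*_\alpha$, within any connected component $A$ of $G_\alpha^{t_i}$ all pairwise block distances are $\leq t_i$ (again by the ultrametric inequality), so $G_\alpha^{t_i}|_A$ is complete; in particular the subgraph $H_\alpha(A)$ on big blocks is complete, hence connected, and Remark \ref{Remark: connected} collapses $A$ into a single block of $\theta^*_\alpha(t_i)$, matching the $SL$ merge.

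The main subtlety is verifying condition $ii$ in the step above: this is precisely where the ultrametric hypothesis genuinely enters, and it is used twice, first to make each $B_j$ a full simplex of the expected dimension, and second to promote a single short cross-distance into uniformly short cross-distances. Everything else is routine bookkeeping of the induction and of the definitions of big and small blocks. The argument requires $\alpha\geq 1$, which covers the setting $\alpha\in\bn$ assumed throughout the paper.
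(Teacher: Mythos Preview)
Your proof is correct and follows essentially the same approach as the paper's: induction on $i$, use of the ultrametric inequality to promote one short cross-distance to all cross-distances being $\leq t_i$, verification of condition $ii$ via the top simplex on $B_j\cup B_k$, and then the observation that each connected component $A$ of $G_\alpha^{t_i}$ is a complete graph so that $H_\alpha(A)$ is connected and Remark \ref{Remark: connected} applies. The only minor difference is that you make the dimension count and the implicit hypothesis $\alpha\geq 1$ explicit, whereas the paper simply asserts ``condition $ii$ holds for every $\alpha$'' (which is to be read with the paper's standing convention $\alpha\in\bn$).
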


\begin{proof} By definition, $\theta_\alpha(t_{0})=\theta_{SL}(t_{0})=\theta^*_{\alpha}(t_{0})$. Suppose $\theta_\alpha(t_{i-1})=\theta_{SL}(t_{i-1})=\theta^*_{\alpha}(t_{i-1})=\{B_1,...,B_n\}$. Let us see that $\theta_\alpha(t_{i})=\theta_{SL}(t_{i})=\theta^*_{\alpha}(t_{i})$.

Let $B_i$, $B_j$ be such that $\min\{d(x,y)\, | \, x\in B_i,\ y\in B_j\}\leq t_i$. Since $B_i$, $B_j$ are $(t_{i-1})$-components, by the properties of the ultrametric, 
%\begin{lema}\label{Lema: ultrametric} 
%\end{lema}
%$d(x_1,x_2)\leq t_{i-1}$ for every $x_1,x_2\in B_1$ and $d(y_1,y_2)\leq t_{i-1}$ for every $y_1,y_2\in B_1$
$d(x,y)\leq t_{i}$ for every $(x,y)\in B_1\times B_2$.

Therefore, every pair of points in $B_1\cup B_2$ define a simplex in $F_{t_i}(B_1\cup B_2)$ and condition $ii$ holds for every $\alpha$. Thus, there is an edge defined between $B_i$ and $B_j$. This proves that $\theta_{\alpha}=\theta_{SL}$.

Now, let  $B_i,B_j$ be two blocks in the same connected component of $G_\alpha^{t_i}$. Then, by the properties of the ultrametric, $\{B_i,B_j\}$ is an edge of $G_\alpha^{t_i}$. Hence, $H_\alpha(G_\alpha^{t_i})$ is connected and, as we saw in Remark \ref{Remark: connected}, $\theta^*_\alpha(t_i)$ is defined by the connected components of $G_\alpha^{t_i}$. This proves that $\theta^*_{\alpha}=\theta_{SL}$. 
\end{proof}

\begin{cor} \label{Prop: ultram_2} If $(X,d)$ is an ultrametric space, then $u_{\alpha}(x,y)=u^*_\alpha(x,y)=d(x,y)$ for every $x,y\in X$.
\end{cor}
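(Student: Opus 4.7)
The plan is to observe that this corollary is an immediate consequence of the two preceding results, so essentially no new argument is required; I just need to chain them correctly. By Proposition \ref{Prop: ultram_1}, whenever $(X,d)$ is an ultrametric space we have the equality of dendrograms $\theta_\alpha = \theta_{SL} = \theta^*_\alpha$ for every $\alpha$. Applying the dendrogram-to-ultrametric correspondence $\eta$ to each of these dendrograms therefore yields $u_\alpha = u_{SL} = u^*_\alpha$, simply because $\eta$ is a function.

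Next, I would invoke Proposition \ref{Prop: ultram_0}, which asserts that on an ultrametric space $(X,d)$ the single linkage ultrametric recovers the original metric, $u_{SL}(x,y) = d(x,y)$ for all $x,y \in X$. Combining this with the equality of ultrametrics from the previous paragraph immediately gives $u_\alpha(x,y) = u^*_\alpha(x,y) = d(x,y)$ for every $x,y \in X$, which is exactly what the corollary claims.

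There is no genuine obstacle here; the only thing to be careful about is to present the argument as a two-line deduction rather than rederiving anything. In particular, I would not reopen the inductive construction of $\theta_\alpha$ or $\theta^*_\alpha$, since all the work on graphs $G_\alpha^{t_i}$ and on big/small blocks has already been carried out inside Proposition \ref{Prop: ultram_1}. The proof thus reduces to citing Propositions \ref{Prop: ultram_1} and \ref{Prop: ultram_0} and concluding.
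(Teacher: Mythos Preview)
Your proposal is correct and matches the paper's intended argument: the corollary is stated without proof precisely because it follows immediately by combining Proposition~\ref{Prop: ultram_1} (equality of the three dendrograms on ultrametric inputs) with Proposition~\ref{Prop: ultram_0} ($u_{SL}=d$ on ultrametric inputs), exactly as you describe.
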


\begin{cor} $SL(\alpha)$ and $SL^*(\alpha)$ satisfy $A1$ and $A3$ but not $A2$. 
\end{cor}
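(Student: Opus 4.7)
The three items decouple cleanly, so my plan is to handle them one at a time using results already in hand, with the only real work being a single explicit counterexample for $A2$.

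For $A1$, Corollary \ref{Prop: ultram_2} asserts $u_{\alpha}(x,y) = u^*_{\alpha}(x,y) = d(x,y)$ whenever $(X,d)$ is ultrametric; this is precisely $A1$ for both methods. For $A3$, the statement of Proposition \ref{Prop: mayor} is literally $u_{SL} \leq u_\alpha$ and $u_{SL} \leq u^*_\alpha$, which is $A3$. So both of these are one-line invocations.

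To show $A2$ fails, I want to exhibit a two-point subspace $X \subset Y$ for which $u_X(p,q) < u_Y(p,q)$ under both methods. The idea is to exploit the very feature that distinguishes the $\alpha$-unchaining schemes from $SL$: when the only bridge between two clusters is a single short edge, condition $ii$ blocks the merge at the bridge length. Concretely I take $Y$ built from two small clusters $B_1 = \{p,a,b\}$ and $B_2 = \{q,c,d\}$, with within-cluster distances $1/2$, $d(p,q) = 1$, and all other cross-distances equal to $3/2$. The triangle inequality is straightforward to verify for this choice, and for $X = \{p,q\}$ one has $u_X(p,q) = 1$ by construction.

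Running $SL(\alpha)$ (resp.\ $SL^*(\alpha)$) with $\alpha = 1$ on $Y$ should go as follows: at scale $1/2$ the blocks $B_1, B_2$ form, each of dimension $2$ in the corresponding Rips complex. At scale $1$ the only cross-edge present in $F_1$ is $\{p,q\}$, so no $2$-simplex in $F_1(B_1 \cup B_2)$ meets both blocks; condition $ii$ fails and the two clusters remain separate (for $SL^*(\alpha)$ the graph $G_\alpha^1$ still has $B_1,B_2$ as isolated vertices, so conditions $iii$--$iv$ change nothing). The merge happens only at scale $3/2$, giving $u_Y(p,q) = 3/2 > 1 = u_X(p,q)$, which contradicts $A2$. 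The only delicate point is to pick the cross-distance $3/2$ small enough to satisfy $d(a,q) \leq d(a,p) + d(p,q)$ yet strictly larger than $1 = d(p,q)$, so that condition $ii$ genuinely fails at the critical scale; this is the main (and essentially the only) obstacle in the argument.
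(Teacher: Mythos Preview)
Your handling of $A1$ and $A3$ is identical to the paper's: both are one-line invocations of Corollary~\ref{Prop: ultram_2} and Proposition~\ref{Prop: mayor}. Your counterexample for $A2$ is correct and close in spirit to the paper's illustrative example (Figure~\ref{Ejp Chain_1b}), just with a different concrete configuration.

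The one genuine difference is in how the failure of $A2$ is argued. The paper's primary argument is not an explicit example but a one-line appeal to the characterization: since $A1$ and $A3$ have already been established, if $A2$ also held then Corollary~\ref{Cor: characterization} would force the method to coincide with $SL$, which it does not. This disposes of every $\alpha$ at once. Your explicit construction, by contrast, is worked out only for $\alpha=1$; for general $\alpha$ you would need to enlarge the two clusters so that $\dim F_{t}(B_i)>\alpha$ at the bridge scale, or else invoke the characterization. Note that the paper's own explicit example also only covers $\alpha<3$, so it is really the characterization step that carries the full statement there too; you may want to add that observation rather than rely solely on a single-$\alpha$ counterexample.
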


Notice that if $A2$ were also satisfied then, by Corollary \ref{Cor: characterization}, the method would be exactly $SL$. For an example of how these methods fail to satisfy $A2$ consider the following example from \cite{M1}.

\begin{figure}[ht]
\centering
\includegraphics[scale=0.4]{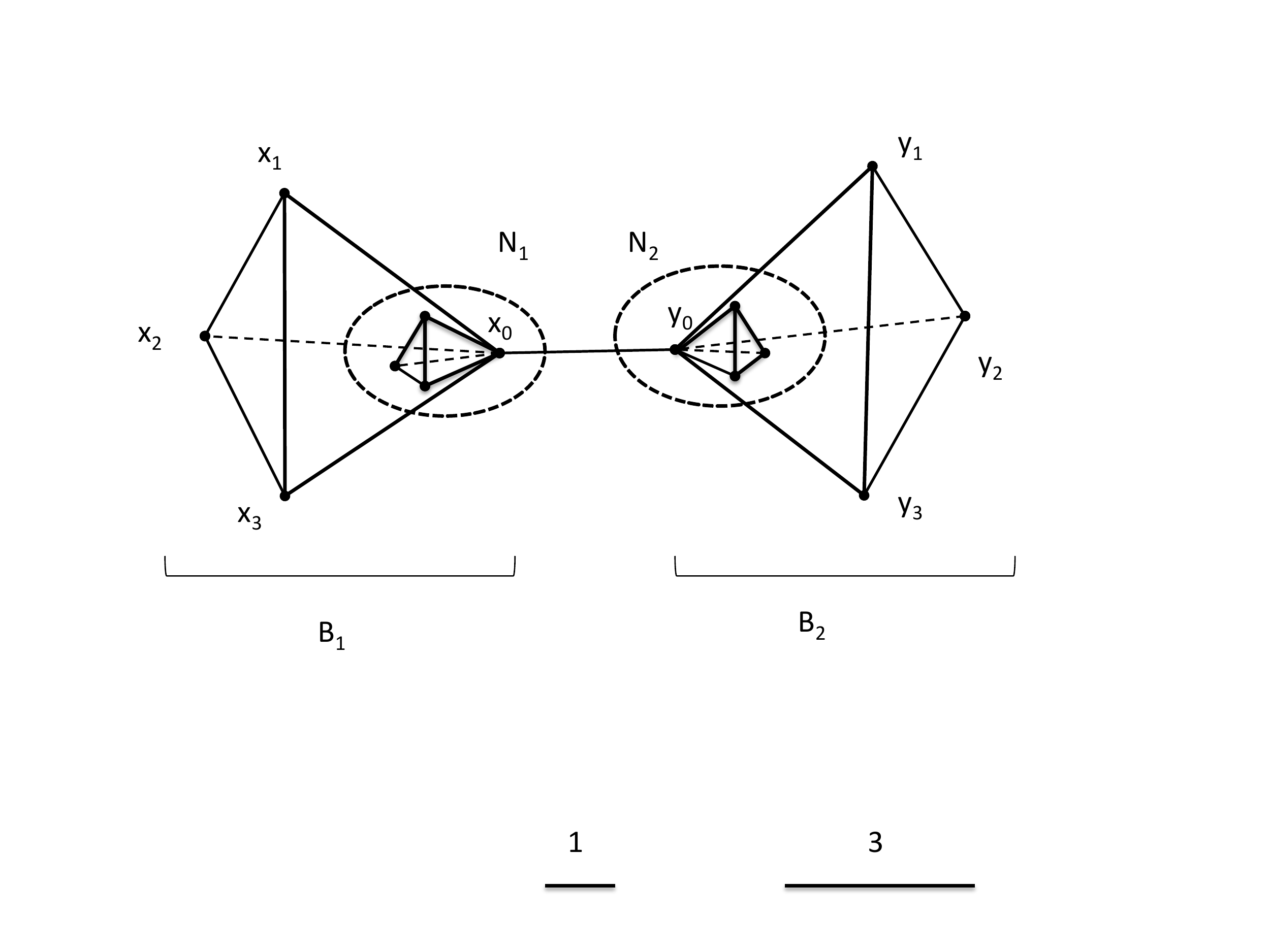}
\caption{$SL(\alpha)$ with $\alpha<3$ does not satisfy A2.}
\label{Ejp Chain_1b}
\end{figure}

\begin{ejp} Let $(X,d)$ be the graph from Figure \ref{Ejp Chain_1b}. 

Suppose the edges in $N_1,N_2$ have length 1 and the rest have length 3. The distances between vertices are measured as the minimal length of a path joining them. 

%Consider $F_{1}(X,d)$. Then, the vertices of $N_1$ and $N_2$ define 3-dimensional simplices. For any vertex $v\in X\backslash \{N_1\cup N_2\}$, there is no vertex $w$ with $d(v,w)\leq 1$. Therefore, they define 0-dimensional simplices and they are not part of any 1-dimensional simplex of $F_{1}(X,d)$.  

Let $Z:=\{x_0,y_0\}$ and $d'(x_0,y_0)=3$. Let $i\co (Z,d')\to (X,d)$ be the inclusion map. It is immediate to check that applying either $SL(\alpha)$ or $SL^*(\alpha)$ with $\alpha<3$ we obtain ultrametric spaces $(Z,u_Z)$, $(X,u_X)$ such that $u_Z(x_0,y_0)<u_X(x_0,y_0)$.
\end{ejp}

\begin{cor}\label{Cor: rich_2} $SL(\alpha)$ and $SL^*(\alpha)$ satisfy richness property.
\end{cor}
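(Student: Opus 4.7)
The plan is to reduce the richness property for $SL(\alpha)$ and $SL^*(\alpha)$ to the already established behavior of these algorithms on ultrametric inputs, via the classical equivalence between dendrograms and ultrametrics recalled in Section \ref{Section: background}.

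Given any $\theta \in \mathcal{D}(X)$, the first step is to produce a candidate metric. Using the equivalence between dendrograms over $X$ and ultrametrics on $X$, let $u_\theta$ be the ultrametric canonically associated to $\theta$; this is precisely the metric used in the proof of Corollary \ref{Cor: rich_1} to establish richness of $SL$. In particular, $\mathfrak{T}^{SL}_\mathcal{D}(X, u_\theta) = \theta$.

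The second step is to observe that $(X, u_\theta)$ is an ultrametric space, and therefore Proposition \ref{Prop: ultram_1} applies: $\theta_\alpha = \theta_{SL} = \theta^*_\alpha$ when the input metric is ultrametric. Consequently, $\mathfrak{T}^{SL(\alpha)}_\mathcal{D}(X, u_\theta) = \mathfrak{T}^{SL}_\mathcal{D}(X, u_\theta) = \theta$ and likewise $\mathfrak{T}^{SL^*(\alpha)}_\mathcal{D}(X, u_\theta) = \theta$. Setting $d_\theta := u_\theta$ in each case shows that both methods realize every dendrogram in $\mathcal{D}(X)$, which is exactly the richness property.

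There is no real obstacle here: the argument is just the composition of Corollary \ref{Cor: rich_1} with Proposition \ref{Prop: ultram_1}. The only thing worth being careful about is that the ultrametric $u_\theta$ is genuinely a metric (not merely a pseudometric) on $X$, which follows from condition 1 in the definition of a dendrogram in Section \ref{Section: background}, ensuring that distinct points of $X$ are separated at some positive level.
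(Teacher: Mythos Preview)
Your argument is correct and is exactly the intended one: the paper states this as an immediate corollary of Proposition~\ref{Prop: ultram_1} (equivalently, of property $A1$), via the dendrogram--ultrametric correspondence, just as Corollary~\ref{Cor: rich_1} follows from Proposition~\ref{Prop: ultram_0}. The only minor imprecision is that condition~1 in the definition of a dendrogram gives separation at level $0$, and it is condition~4 (right-continuity at $0$) that guarantees separation at some \emph{positive} level, hence that $u_\theta$ is a genuine metric.
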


%From the proof of \ref{Prop: n/2}, it follows also the following.

%\begin{prop} If $\alpha\geq \frac{n}{2}$, then $(X,d)$ is well chained
%\end{prop}

As we saw in \cite{M2}, $SL(\alpha)$ is semi-stable in the Gromov-Hausdorff sense. Unfortunately, most of the good stability properties of $SL$ do not hold. $SL(\alpha)$ and $SL^*(\alpha)$ are not stable in the Gromov-Hausdorff sense (see \cite{M2}) and it is not difficult to check that $SL^*(\alpha)$ is not semi-stable in the Gromov-Hausdorff sense. Small perturbations on the distances may affect the dimension of the Rips complex and to whether or not condition $ii$ applies. Also, they may affect the size of the components and yield very different graphs $G_\alpha^{t_i}$. Furthermore, changing the parameter $\alpha$ we may obtain a very different dendrogram. However, all the instability is produced by the unchaining conditions $ii$, $iii$  and $iv$. Thus, $\theta_\alpha$ and $\theta^*_\alpha$ may be compared with $\theta$ to, at least, keep track of the undesired effects on the stability introduced with the unchaining conditions. %Also, increasing the parameter $\alpha$, at some point we obtain that $\mathfrak{T}^{SL(\alpha)}=\mathfrak{T}^{SL^*(\alpha)}=\mathfrak{T}^{SL}$. See Proposition \ref{Prop: n/2}. %Thus, in practice, it may be interesting to study the whole family of dendrograms obtained considering all values for the parameter.   

\begin{figure}[ht]
\centering
\includegraphics[scale=0.3]{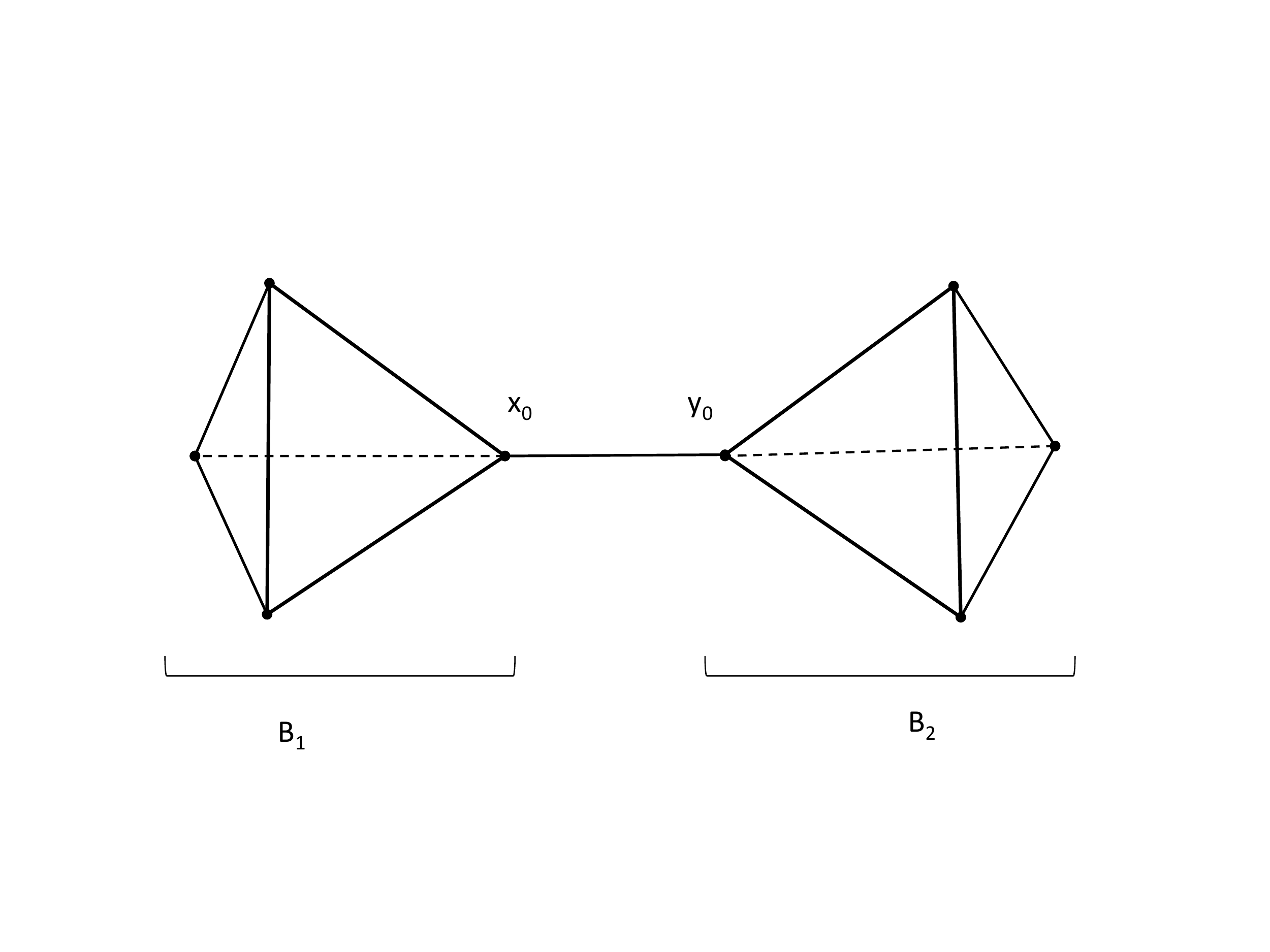}
\caption{A small perturbation in the distance between $x_0$ and $y_0$ produces very different dendrograms.}
\label{Ejp Chain_1}
\end{figure}

\begin{ejp}\label{Ejp: unstable}  Let $(X,d)$ be the graph from Figure \ref{Ejp Chain_1} where every edge has length 1 and let $(X,d')$ be the same graph where $d(x_0,y_0)=1+\varepsilon$ for some $\varepsilon>0$ and the rest of the edges have length 1. Let $\theta_1=\mathfrak{T}^{SL(1)}_\mathcal{D}(X,d)$ and $\theta'_1=\mathfrak{T}^{SL(1)}_\mathcal{D}(X,d')$.

As we saw above, $\theta_1(t)=\{\{x_0\},...,\{x_3\},\{y_0\},...\{y_3\}\}$ if $t<1$ and $\theta_1(1)=\{X\}$. Thus, if $\eta(\theta_1)=u$ if follows that $u(x,y)=1$ $\forall \, x,y\in X$. 

If we apply $SL(1)$ to $(X,d')$ we obtain that $\theta'_1(t)=\{\{x_0\},...,\{x_3\},\{y_0\},...\{y_3\}\}$ if $t<1$ and $\theta'_1(t)=\{B_1,B_2\}$ for $1\leq t<1+\varepsilon$. For $1+\varepsilon\leq t \leq 2$, by condition $ii$, there is no edge in $G^t_1$ between $B_1$ and $B_2$. Thus, $\theta'_1(t)=\{B_1,B_2\}$ for $1+\varepsilon \leq t < 2+\varepsilon$. For $t\geq 2+\varepsilon$, $\theta'_1(t)=X$. Thus, if  $\eta(\theta'_1)=u'$ if follows that $u'(x_i,x_j)=1$ $\forall \, x_i,x_j\in B_1$, $u'(y_i,y_j)=1$ $\forall \, y_i,y_j\in B_2$ and $u'(x_i,y_j)=2+\varepsilon$ $\forall \, (x_i,y_j)\in B_1\times B_2$.

In this case, $d_{\mathcal{GH}}((X,d),(X,d'))= \frac{\varepsilon}{2}$ and $d_{\mathcal{GH}}((X,u),(X,u'))= \frac{1+\varepsilon}{2}$. 
%First, notice that the canonical relation $(x_i,x'_i)\in R$ $i=0,3$ and $(y_j,y'_j)\in R$ $j=0,3$ minimizes the Gromov-Hausdorff distance $d_{\mathcal{GH}}((X,d),(X',d'))$.
Therefore, $SL(\alpha)$ is not stable in the Gromov-Hausdorff sense.
\end{ejp}

Also, it is unstable under the change of the parameter $\alpha$.

\begin{ejp}  Let $(X,d')$ be the graph with the metric defined in Example \ref{Ejp: unstable}.

As we just saw, if  $\eta(\theta'_1)=u'$ if follows that $u'(x_i,x_j)=1$ $\forall \, x_i,x_j\in B_1$, $u'(y_i,y_j)=1$ $\forall \, y_i,y_j\in B_2$ and $u'(x_i,y_j)=2+\varepsilon$ $\forall \, (x_i,y_j)\in B_1\times B_2$.

If we apply $SL(3)$ to $(X,d')$ we obtain that $\theta'_3(t)=\{\{x_0\},...,\{x_3\},\{y_0\},...\{y_3\}\}$ if $t<1$ and $\theta'_3(t)=\{B_1,B_2\}$ for $1\leq t<1+\varepsilon$. For $1+\varepsilon\leq t \leq 2$, since $\alpha =3$ there is an edge in $G^t_3$ between $B_1$ and $B_2$. Thus, $\theta'_3(t)=\{X\}$ for $1+\varepsilon \leq t$. Hence, if  $\eta(\theta'_3)=u''$ if follows that $u''(x_i,x_j)=1$ $\forall \, x_i,x_j\in B_1$, $u''(y_i,y_j)=1$ $\forall \, y_i,y_j\in B_2$ and $u''(x_i,y_j)=1+\varepsilon$ $\forall \, (x_i,y_j)\in B_1\times B_2$.

Therefore, $d_{\mathcal{GH}}((X,u'),(X,u''))= \frac{1}{2}$. 
\end{ejp}

One may wonder if given $\alpha >\alpha'$ anything can be told about the corresponding dendrograms. In particular, given $\mathfrak{T}^{SL^*(\alpha)}_\mathcal{U}(X,d)=u^*_\alpha$ and $\mathfrak{T}^{SL^*(\alpha')}_\mathcal{U}(X,d)=u^*_{\alpha'}$, it is natural to ask if $u^*_{\alpha} \leq u^*_{\alpha'}$ or $u^*_{\alpha'}\leq u^*_{\alpha}$. This need not be true. In fact, it may fail  by conditions $iii$ and $iv$, see Example \ref{Ejp: unrefine_1}, or by condition $ii$, see Example \ref{Ejp: unrefine_2}.

\begin{figure}[ht]
\centering
\includegraphics[scale=0.4]{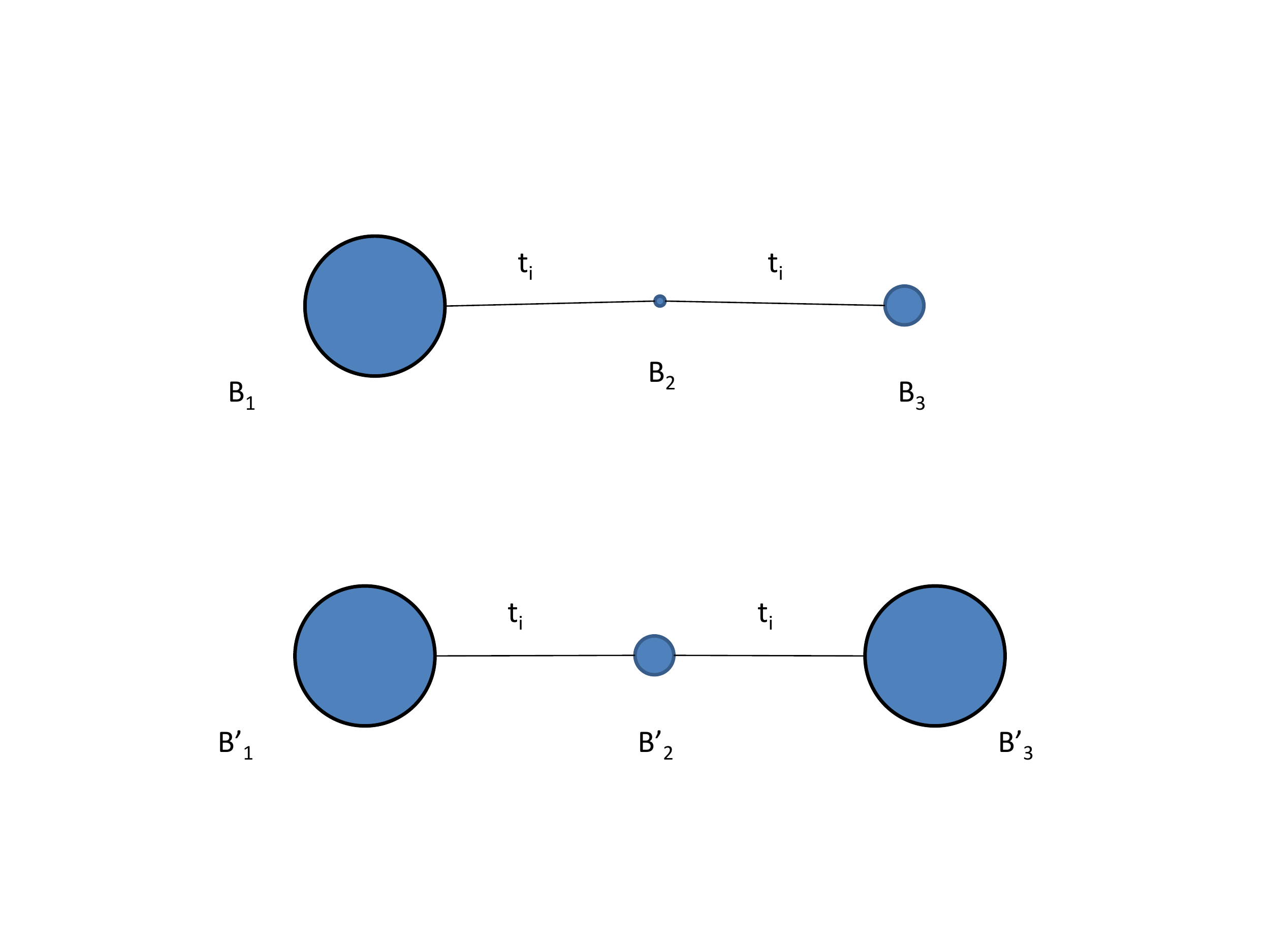}
\caption{A change on the parameter $\alpha$ may have different effects on the chaining through smaller blocks.}
\label{Unstable_1}
\end{figure}

\begin{ejp}\label{Ejp: unrefine_1} Let $\alpha >\alpha'$. Suppose that $\theta^*_\alpha(t_{i-1})=\theta^*_{\alpha'}(t_{i-1})=\{B_1,B_2,B_3\}$.  See the example above from Figure \ref{Unstable_1}. Now, suppose that conditions $i,$ $ii$ define edges $\{B_1,B_2\}$ and $\{B_2,B_3\}$ but not $\{B_1,B_3\}$ in both $G_\alpha^{t_i}$ and $G_{\alpha'}^{t_i}$.

Suppose that $\max_{1\leq l \leq 3}\{ \#(B_l)\}=\#(B_1)$. Also, let us suppose that $\alpha \cdot \#(B_2)< \#(B_1)$, $\alpha' \cdot \#(B_2)< \#(B_1)$, $\alpha \cdot \#(B_3)\geq \#(B_1)$ but $\alpha' \cdot \#(B_3)< \#(B_1)$. In this case, there is a unique connected component $A=\{B_1,B_2,B_3\}$ and $H_{\alpha'}(A)=\{B_1\}$ is connected while $H_{\alpha}(A)=\{B_1,B_3\}$ is not connected. Thus, $\theta^*_\alpha(t_i)=\{B_1,B_2,B_3\}$ and $\theta^*_{\alpha'}(t_i)=\{B_1\cup B_2\cup B_3\}=\{X\}$.

Suppose that $\theta^*_{\alpha'}(t_{i-1})=\theta^*_{\alpha'}(t_{i-1})=\{B'_1,B'_2,B'_3\}$.  See the example below from Figure \ref{Unstable_1}. Now, suppose that conditions $i,$ $ii$ define edges $\{B'_1,B'_2\}$ and $\{B'_2,B'_3\}$ but not $\{B'_1,B'_3\}$ in both $G_\alpha^{t_i}$ and $G_{\alpha'}^{t_i}$.

Suppose that $\max_{1\leq l \leq 3}\{ \#(B'_l)\}=\#(B'_1)$. Let us suppose that $\alpha \cdot \#(B'_3)> \#(B'_1)$, $\alpha' \cdot \#(B'_3)> \#(B'_1)$, $\alpha \cdot \#(B'_2)\geq \#(B'_1)$ but $\alpha' \cdot \#(B'_2)< \#(B'_1)$. In this case, there is a unique connected component $A=\{B_1,B_2,B_3\}$, $H_{\alpha}(A)$ has vertices $B'_1,B'_2,B'_3$ and it is connected while $H_{\alpha'}(A)=\{B'_1,B'_3\}$ is not connected. Thus, $\theta^*_\alpha(t_i)=\{B'_1\cup B'_2\cup B'_3\}=\{X\}$ and $\theta^*_{\alpha'}(t_i)=\{B'_1,B'_2,B'_3\}$.

Hence, even in the case when there is no chaining effect between adjacent blocks, $\theta^*_\alpha(t_i)$ need not refine $\theta^*_{\alpha'}(t_i)$ and $\theta^*_{\alpha'}(t_i)$ need not refine $\theta^*_{\alpha}(t_i)$. 

In particular, $u^*_{\alpha}\not \leq u^*_{\alpha'}$ and $u^*_{\alpha'}\not \leq u^*_{\alpha}$.
\end{ejp}

\begin{ejp}\label{Ejp: unrefine_2} Let $\alpha >\alpha'$. Suppose $\theta_\alpha(t_{i-1})=\theta_{\alpha'}(t_{i-1})=\{B_1,B_2,B_3,B_4\}$, $d(B_1,B_2)=d(B_3,B_4)=t_i$, $d(B_1,B_3)=t_{i+1}$ and the rest of respective distances between these blocks are bigger than $t_{i+1}$. See Figure \ref{Unstable_2}.

%Suppose two blocks $B_i,B_j$ in both $\theta_\alpha(t_{i-1})$ and $\theta_{\alpha'}(t_{i-1})$. Since $\alpha >\alpha'$, we may assume, by condition $ii)$, that there is an edge between them for $\alpha$ but not for $\alpha'$. Thus, suppose $\theta_\alpha(t_i)=\{B_6,B_7\}$ while $\theta_{\alpha'}(t_i)=\{B_1,B_2,B_3,B_4\}$. 

Since $\alpha >\alpha'$, we may assume, by condition $ii$, that there is an edge between $B_1,B_2$ and between $B_3,B_4$ in $G_\alpha^{t_i}$ but not in $G_{\alpha'}^{t_i}$. Thus, suppose $\theta_\alpha(t_i)=\{B_6,B_7\}$ while $\theta_{\alpha'}(t_i)=\{B_1,B_2,B_3,B_4\}$.

Now, we may assume that $dim(F_{t_{i+1}}(B_1)),dim(F_{t_{i+1}}(B_2))<dim(F_{t_{i+1}}(B_6))$ and $dim(F_{t_{i+1}}(B_3)),dim(F_{t_{i+1}}(B_4))<dim(F_{t_{i+1}}(B_7))$. Thus, we may also assume that, at $t_{i+1}$, for $\alpha'$ there is no edge between $B_6,B_7$ but for $\alpha$ there is an edge between $B_1,B_3$. Therefore, $\theta_{\alpha'}(t_{i+1})=\{B_6,B_7\}$ while $\theta_{\alpha}(t_{i+1})=\{B_5,B_2,B_4\}$. 

Hence,  $\theta_\alpha(t_i)$ does not refine $\theta_{\alpha'}(t_i)$ and $\theta_{\alpha'}(t_i)$ does not refine $\theta_{\alpha}(t_i)$.

In particular, it is immediate to check that $u_{\alpha}\not \leq u_{\alpha'}$ and $u_{\alpha'}\not \leq u_{\alpha}$.
\end{ejp}

\begin{figure}[ht]
\centering
\includegraphics[scale=0.4]{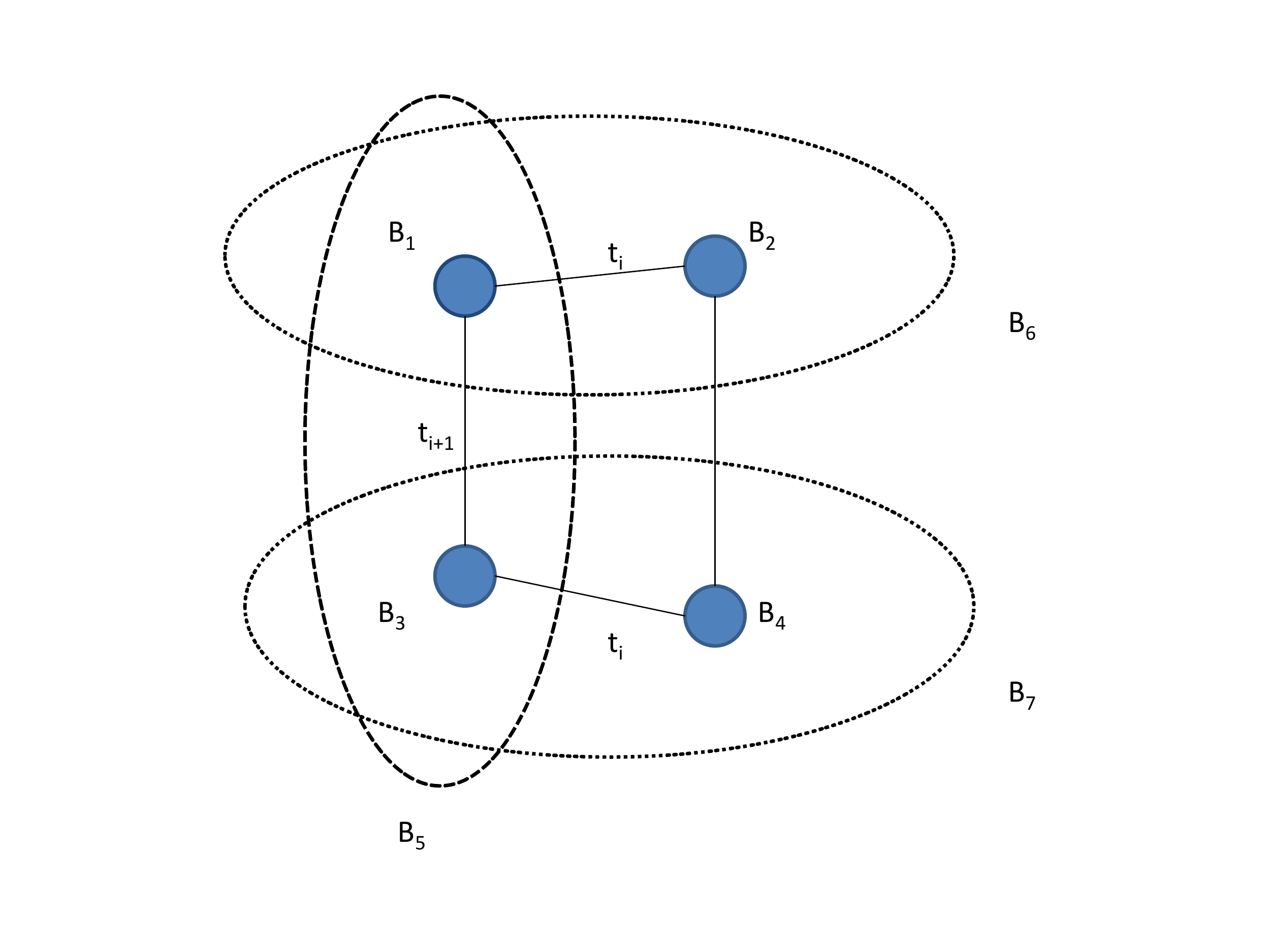}
\caption{A bigger $\alpha$ does not imply a smaller ultrametric.}
\label{Unstable_2}
\end{figure}

\begin{table}
\centering
\begin{tabular}{|l|c|c|c|c|c|} 
\hline
                           & $SL$   & $CL$    & $AL$  & $SL(\alpha)$  &  $SL^*(\alpha)$  \\
\hline
Permutation invariant      & \cmark & \cmark & \cmark & \cmark       &   \cmark  \\
\hline
Rich                       & \cmark & \cmark & \cmark & \cmark       & \cmark \\
\hline
$A1$                       & \cmark & \cmark & \cmark & \cmark       &   \cmark \\
\hline
$A2$                       & \cmark & \xmark & \xmark & \xmark       &   \xmark \\
\hline
$A3$                       & \cmark & \cmark & \cmark & \cmark       & \cmark \\
\hline
Semi-stable                & \cmark & \cmark & \cmark & \cmark &  \xmark \\
\hline
Stable                     & \cmark & \xmark & \xmark & \xmark &  \xmark  \\
\hline
Strongly chaining          & \cmark & \xmark & \xmark & \xmark   &  \xmark \\
\hline
Completely chaining        & \cmark & \xmark & \xmark & \xmark & \xmark \\
\hline
Weakly unchaining          & \xmark & \xmark & \xmark &  \cmark  &  \cmark  \\
\hline
$\alpha$-bridge-unchaining & \xmark & \xmark & \xmark & \xmark &  \cmark  \\
\hline

\end{tabular}
\caption{Overview of the properties satisfied by the hierarchical clustering methods discussed in this work.}
\label{tabla}
\end{table}

\section{Conclusions}\label{Section: Conclusions}

In the spirit of Kleinberg impossibility result we may consider $A1$ (the algorithm leaves ultrametric spaces invariant) and $A3$ (the distance between two points in the output is at least the minimal $\varepsilon>0$ so that there exists a $\varepsilon$-chain between them in the input) as basic desirable conditions for any $HC$ algorithm $\mathfrak{T}$. Thus, if we assume that 
$\mathfrak{T}$ satisfies $A1$ and $A3$, then either $\mathfrak{T}$ is exactly $SL$ or else, condition $A2$ (adding points to the input will never make increase the distance between previous points in the output) is not satisfied. In particular, 
condition $A2$ is not satisfied by the algorithms defined to treat the chaining effects: $SL(\alpha)$ and $SL^*(\alpha)$.

Apart from this inevitable difference, we prove that the properties $A1$, $A3$, permutation invariance and richness are satisfied by $SL(\alpha)$ and $SL^*(\alpha)$ and also by the classical linkage-based algorithms, $SL$, $CL$ and $AL$.

Their chaining and unchaining  properties were studied in \cite{M1}. The stability properties of linkage-based methods were analyzed in \cite{M2}. The main results are summarized in Table \ref{tabla}.

%\section{Acknowledgments}

%The author would like to express his gratitude to Bruce Hughes for pointing out such a nice paper to read.

\end{document}